\documentclass[10pt]{article} 
\usepackage[accepted]{tmlr}


\usepackage{amsmath,amsfonts,bm}









\def\eqref#1{equation~\ref{#1}}









\def\1{\bm{1}}










\DeclareMathAlphabet{\mathsfit}{\encodingdefault}{\sfdefault}{m}{sl}
\SetMathAlphabet{\mathsfit}{bold}{\encodingdefault}{\sfdefault}{bx}{n}











\newcommand{\KL}{D_{\mathrm{KL}}}
\newcommand{\Var}{\mathrm{Var}}



\usepackage{url}
\usepackage{xcolor}
\usepackage{multirow}
\usepackage{graphicx}
\usepackage{booktabs}
\usepackage[accsupp]{axessibility} 
\usepackage[pagebackref,breaklinks,colorlinks]{hyperref}
\usepackage{subcaption}
\usepackage{amsmath,amssymb,amsthm}
\usepackage{bm}   
\usepackage{graphicx}

\newtheorem{definition}{Definition}
\newtheorem{theorem}{Theorem}


\newcommand{\Softmax}{\sigma}


\title{LumiNet: Perception-Driven Knowledge Distillation via Statistical Logit Calibration}


\author{\name Md. Ismail Hossain \email ismail.hossain2018@northsouth.edu \\
      \addr Apurba-NSU R\&D Lab,   
      North South University, Bangladesh
      \AND
      \name  M M Lutfe Elahi  \email lutfe.elahi@northsouth.edu \\
      \addr Apurba-NSU R\&D Lab,   
      North South University, Bangladesh
      \AND
      \name  Sameera Ramasinghe \email Sameera@pluralis.ai\\
      \addr  Pluralis Research 
      \AND
      \name Ali
Cheraghian \email ali.cheraghian@data61.csiro.au\\
      \addr  Data61-CSIRO, Australia \& Australian National University, Australia
      \AND
      \name Fuad Rahman \email fuad@apurbatech.com \\
      \addr Apurba Technologies, Sunnyvale, CA 94085, USA
      \AND
      \name Nabeel Mohammed \email nabeel.mohammed@northsouth.edu\\
      \addr  Apurba-NSU R\&D Lab,   
      North South University, Bangladesh
      \AND
      \name Shafin Rahman \email shafin.rahman@northsouth.edu\\
      \addr  Apurba-NSU R\&D Lab, 
      North South University, Bangladesh}



\begin{document}

\maketitle

\begin{abstract}

In the knowledge distillation literature, feature-based methods have dominated due to their ability to effectively tap into extensive teacher models. In contrast, logit-based approaches, which aim to distill `dark knowledge' from teachers, typically exhibit inferior performance compared to feature-based methods. To bridge this gap, we present LumiNet, a novel knowledge distillation algorithm designed to enhance logit-based distillation. We introduce the concept of `perception', aiming to calibrate logits based on the model's representation capability. This concept addresses overconfidence issues in the logit-based distillation method while also introducing a novel method to distill knowledge from the teacher. It reconstructs the logits of a sample/instances by considering relationships with other samples in the batch. LumiNet excels on benchmarks like CIFAR-100, ImageNet, and MSCOCO, outperforming the leading feature-based methods, e.g., compared to KD with ResNet18 and MobileNetV2 on ImageNet, it shows improvements of 1.5\% and 2.05\%, respectively. 
\end{abstract}

\section{Introduction}
\label{sec:intro}

The advancement in deep learning models has undergone significant increases in both complexity and performance. However, this progress brings challenges associated with computational demands and model scalability. To mitigate this, knowledge distillation (KD) has been proposed as an efficient strategy \citep{r1} to transfer knowledge from a larger, intricate model (teacher) to a more compact, simpler model (student). The primary objective is to trade off performance and computational efficiency. There are two broad categories of KD: logit and feature-based strategies \citep{r19,r20,r21,r22}. In logit-based methods, a student model aims to match the probability distributions of a teacher model by mimicking the raw logits\citep{r10,r11,r12}. In contrast, feature-based methods are centered on aligning the intermediate layer representations between the two models \citep{r19}. In general, it has been observed that feature-based KD outperforms logit-based KD \citep{r12}. However, feature-based KD suffers from layer misalignment \citep{r19} (reducing sample density in this space), privacy concerns \citep{ chakraborty2021survey} (intermediate model layers accessible for adversarial attacks revealing training data and posing significant threats), and escalating computational requirements \citep{yang2023attention,r12} (see Figure. \ref{fig:classicalkd}). These issues raise questions about its effectiveness, particularly in industrial applications. While logit-based KD avoids the computational and privacy pitfalls of feature alignment, its performance gap relative to feature-based methods has limited its adoption in resource-constrained industrial settings where efficiency and robustness are paramount. Similarly, these issues underscore the potential merits of logit-based KD over feature-based KD. This paper aims to enhance the effectiveness of logit-based knowledge distillation by leveraging its underlying strengths.

\begin{figure*}[!t]
    \centering
    \centering
    \begin{subfigure}[b]{0.30\linewidth}
        \includegraphics[width=\linewidth]{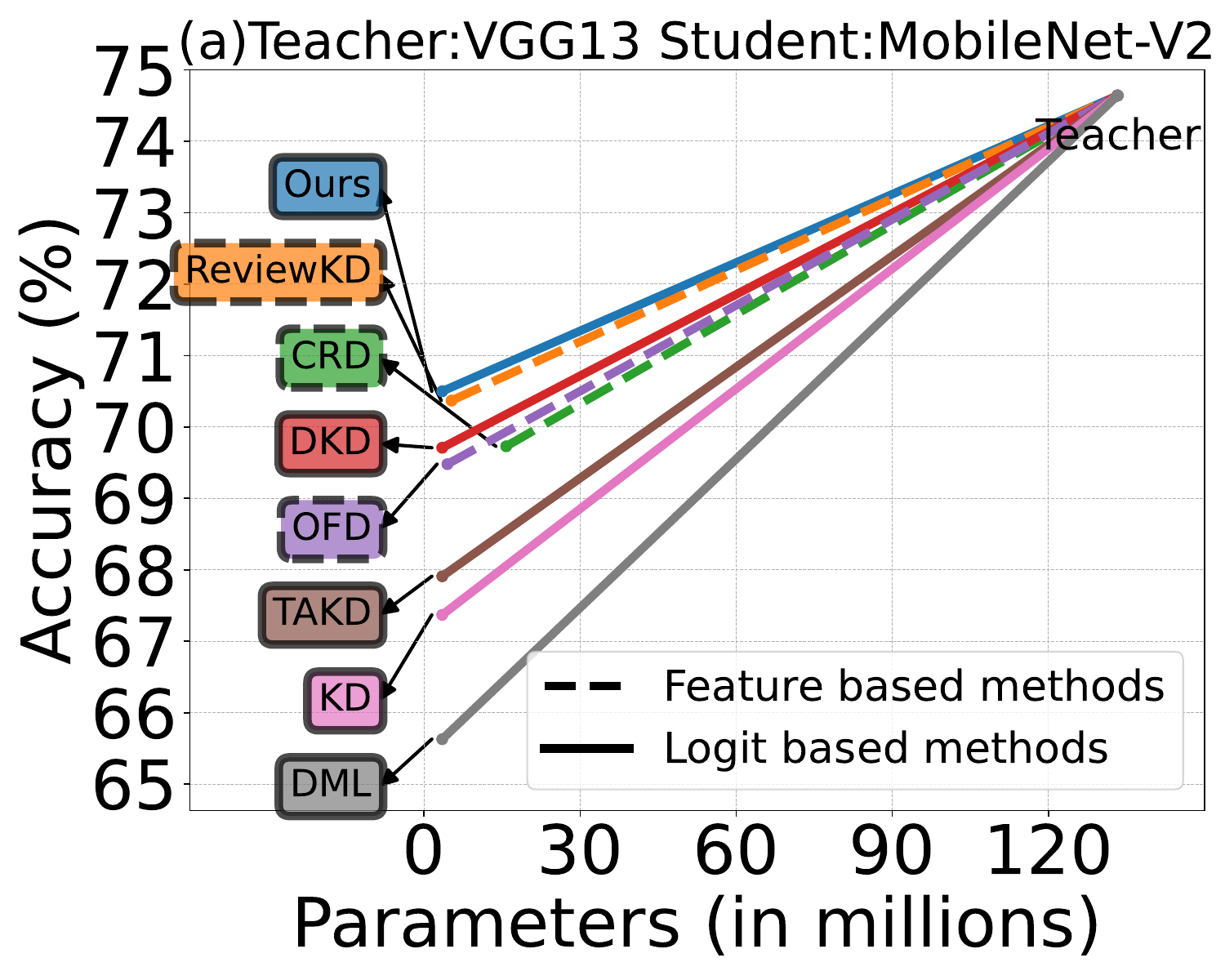}
        \label{fig:sub1a} 
        \rule{\textwidth}{0.5pt}
    \end{subfigure}
    \hfill
    \begin{subfigure}[b]{0.30\linewidth}
        \includegraphics[width=\linewidth]{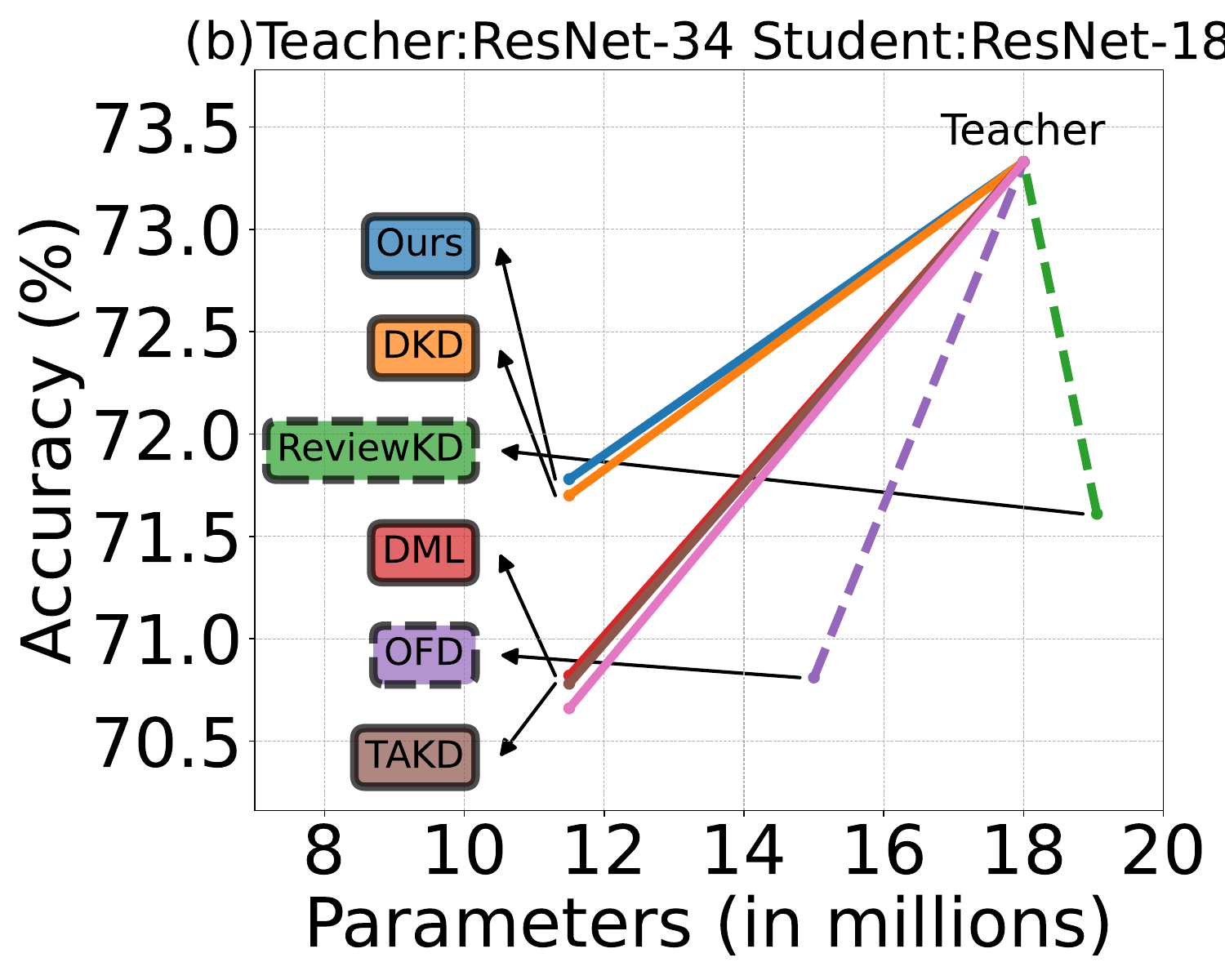}
        \label{fig:sub1b} 
        \rule{\linewidth}{0.5pt}
    \end{subfigure}
    \hfill
    \begin{subfigure}[b]{0.30\linewidth}
        \includegraphics[width=\linewidth]{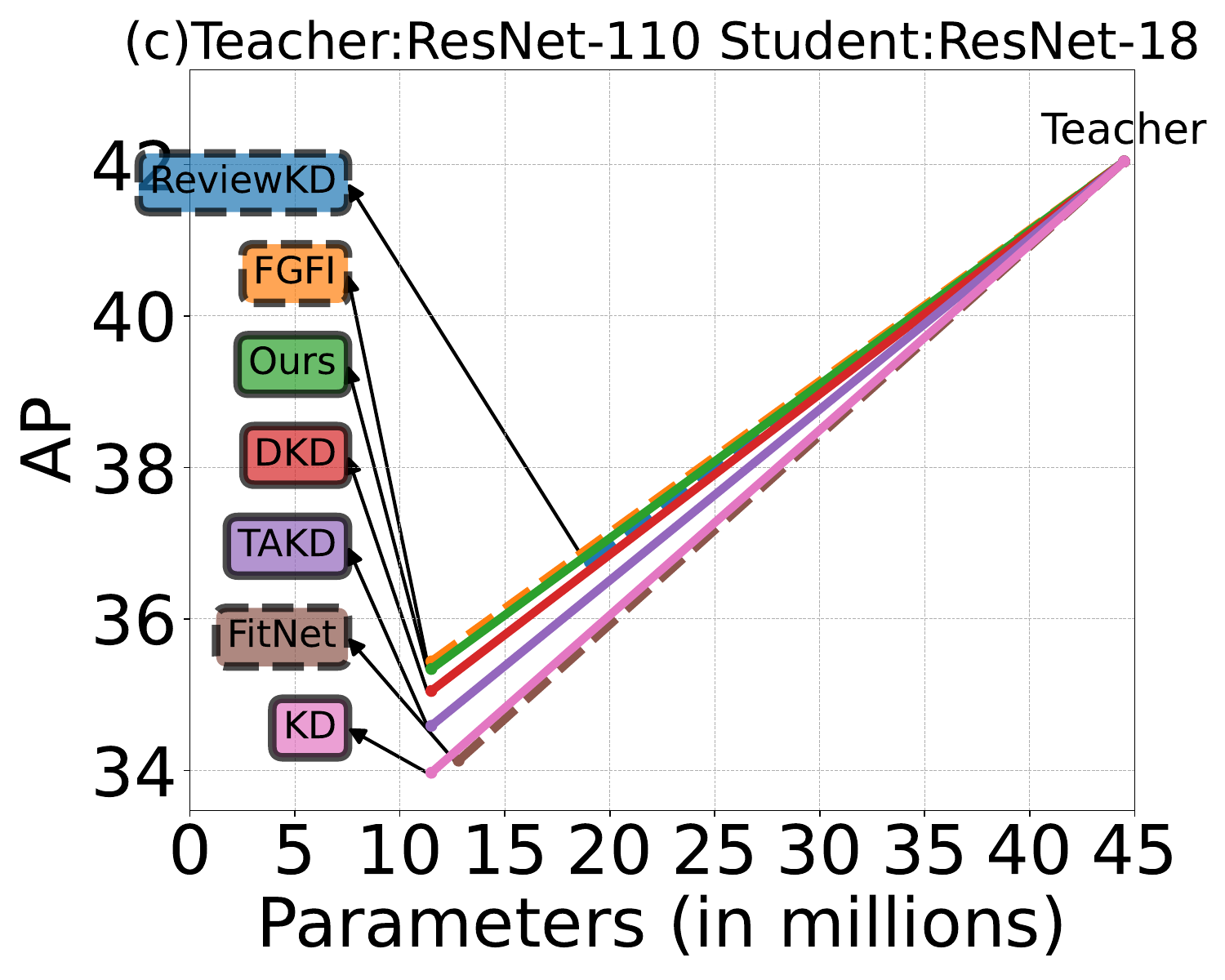}
        \label{fig:sub1c} 
        \rule{\linewidth}{0.5pt} 
    \end{subfigure}
    
    \vspace{1em} 
    \setcounter{subfigure}{3}
    \begin{subfigure}[b]{0.49\linewidth}
        \includegraphics[width=\linewidth]{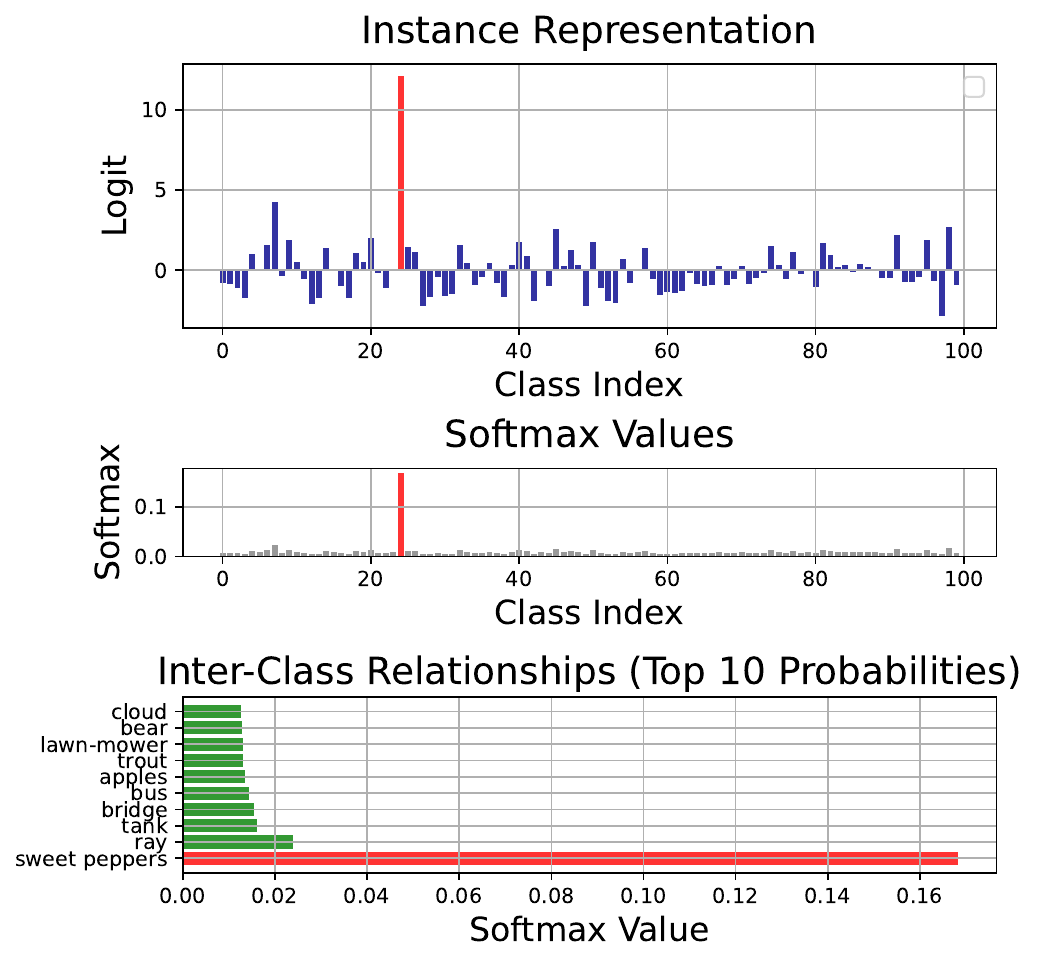 }
        \caption{CIFAR-100 - Before Perception}
        \label{fig:sub2a}
    \end{subfigure}
    \hfill
    \begin{subfigure}[b]{0.49\linewidth}
        \includegraphics[width=\linewidth]{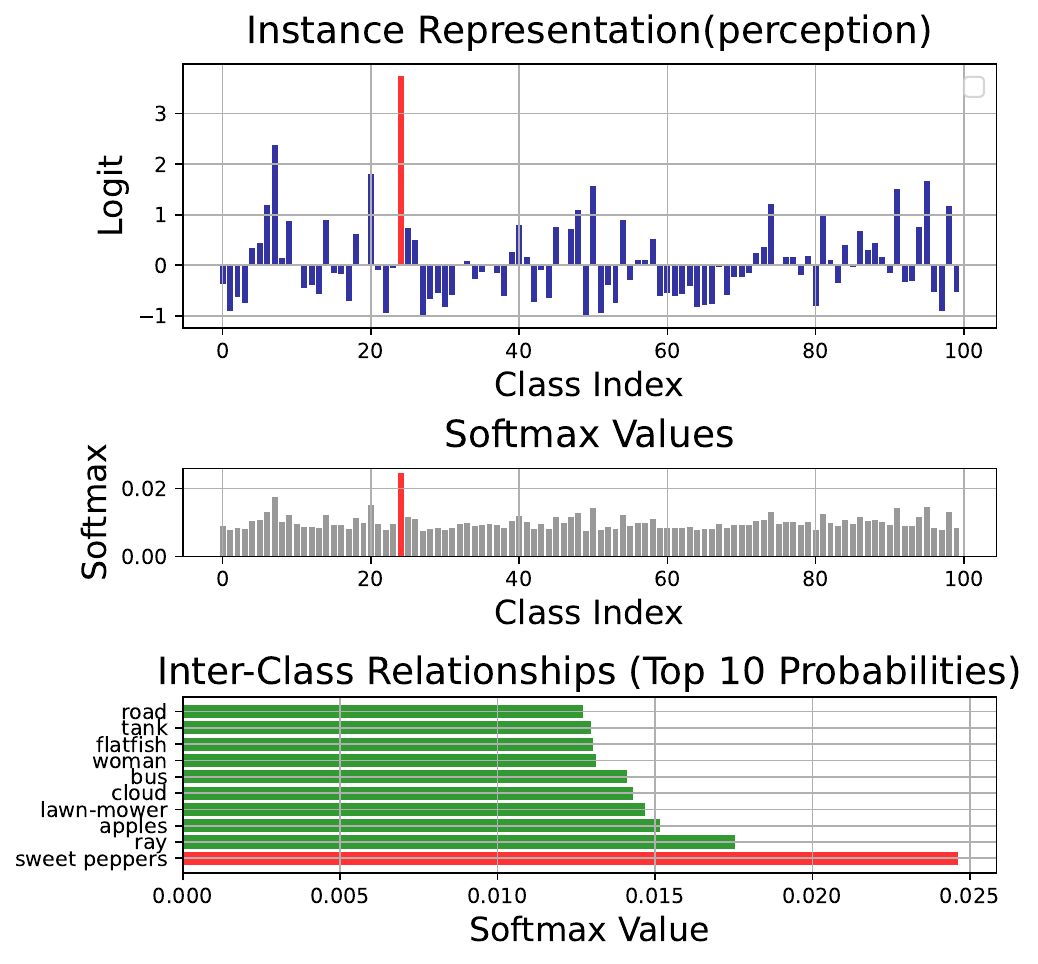}
        \caption{CIFAR-100 - After Perception}
        \label{fig:sub2b}
    \end{subfigure}

    \captionsetup{font=small}
    \caption{Performance comparison of feature-based and logit-based methods on \textbf{(a)} CIFAR-100, \textbf{(b)} ImageNet, and \textbf{(c)} MS COCO datasets. Our proposed LumiNet, a logit-based method, achieves high accuracy without using extra parameters. \textbf{(d-e):} An example of \textit{\textbf{(Left)}} before and \textit{\textbf{(Right)}} after applying our proposed concept perception' on teacher's predicted probabilities. The first set of plots in the top row shows spikes in raw logits for the targeted class (Sweet Peppers, represented in \textcolor{red}{red}). This representation changes after the application of perception. Notably, various classes exhibit similar magnitudes to the targeted class, indicating reduced specificity.  In the second set, despite conventional knowledge distillation softening, as seen in the left figure, there's persistent overconfidence in the target class. Perception minimizes the difference in softmax values between targeted and non-targeted classes, as depicted in the right figure. The third set illustrates inter-class relationships among the top 10 classes. While conventional scenarios (left) maintain these relationships, the perception method significantly alters them (right).}
    \label{fig:classicalkd}
\end{figure*}

Several reasons underpin the disparity between logit- and feature-based KD. \textbf{Firstly}, a significant issue in logit-based knowledge distillation, also observed in data distillation \citep{r83}, is the tendency of the teacher model towards overconfidence \citep{r95}, which assigns disproportionately high probabilities to certain classes and sometimes misclassifies instances with unfounded certainty. Overconfidence refers to the teacher’s overly confident predictions that can obscure useful information and destabilize student learning. This overconfidence poses a challenge to the optimization of the student model, as steep gradients (see Section 3) arise from the teacher's high probability output. Although temperature scaling \citep{r1} is commonly used to soften these probabilities and better reveal the ``dark knowledge” \citep{r8,r12} in non-target classes, identifying an optimal temperature remains non-trivial \citep{r64, r65, r66}. \textbf{Secondly}, in relying solely on logit matching, any confidently wrong predictions from the teacher can be quickly inherited by the student, exacerbating confirmation bias \citep{r95}. Confirmation bias refers to the student’s tendency to replicate the teacher’s incorrect predictions without correction, reinforcing errors. Previously, the authors \citep{r95} attempted to address this issue by sacrificing the teacher's knowledge, as they disregarded outputs that exceed a certain threshold.  Moreover, raw logit matching can introduce additional challenges. \citep{r7} hypothesized that while a student model can imitate the teacher, it fails to enhance accuracy. When the student struggles to replicate the teacher, it indicates a mismatch in their capacities. In both cases, issues arise, particularly with regard to high-capacity teachers. These challenges suggest that the fundamental problem lies within the raw logits themselves, highlighting the need for calibration to resolve these issues.

To address the challenges mentioned earlier, we propose LumiNet, a perception-driven knowledge distillation (KD) method that leverages logit calibration to mitigate overconfidence and confirmation bias by normalizing predictions using batch-level statistics, as illustrated in Figure.~\ref{fig:classicalkd}(e). Unlike conventional methods, which treat logits independently, LumiNet utilizes batch-wide contextual relationships to recalibrate logits into balanced, uncertainty-aware distributions, termed \textit{perception logits}. Specifically, for each class $j$, we compute the mean \(U_{j}\) and variance \(V_{j}\) of the model's logits across the batch. Each logit is then normalized relative to these statistics, inherently suppressing extreme confidence values without explicit temperature scaling. This normalization helps to adjust overly confident predictions, effectively reducing the teacher's mistakenly high-confidence outputs. Due to class-wise normalization, the inter-class relationship (dark knowledge) is altered, as normalization introduces dependencies between logits across different classes in the batch. We call this new representation of instances `perception' logits. The student model learns through two complementary objectives: (1) a cross-entropy loss with ground-truth labels to maintain fidelity to correct class boundaries, and (2) a KL divergence loss aligning student and teacher perception logits to transfer contextualized decision-making patterns. Empirical evaluations confirm LumiNet's effectiveness, demonstrating substantial accuracy improvements—for instance, boosting ResNet8×4 performance on CIFAR-100 from 73.3\% to 77.5\%—while ensuring practical efficiency and applicability for real-world scenarios.

LumiNet is grounded in Kurt Lewin's field theory of Gestalt psychology \citep{r96}, which posits that the behavior of an entity is shaped by the interaction of forces within its environment. In LumiNet, when a teacher model generates an overly confident prediction for a sample, a correction mechanism inspired by Kurt Lewin's field theory comes into play. The batch of data serves as a contextual environment where each sample's logits are influenced by surrounding instances, creating a "field" of interactive forces. Through normalization, LumiNet scales down unusually high confidence values by comparing them to the batch statistics for the same class, implementing what Lewin would describe as balancing ``positive forces" (pulling samples toward consensus) and ``negative forces" (restraining outliers). This dynamic adjustment process—termed ``perception"—treats each sample's logits within the context of surrounding instances, mirroring how human perception adapts to environmental stimuli. By viewing the batch as a Gestalt environment where each sample's representation is influenced by the whole, LumiNet creates a more robust system that preserves valuable teacher model information while preventing individual outliers from having disproportionate influence, ultimately improving model performance through environmentally contextualized learning.

The performance of LumiNet was evaluated across multiple computer vision tasks, including image recognition, object detection, and transfer learning. Results demonstrate strong effectiveness, particularly when using ResNet8x4 as a student model, which achieved 77.5\% accuracy. LumiNet established benchmark-leading performance on key datasets including CIFAR100, ImageNet, MS-COCO, and TinyImageNet. Beyond vision applications, we also adapted this method to the GLUE Benchmark and small language models.

Our contributions are as follows: 

\begin{itemize}
    \item We propose a novel knowledge distillation framework that recalibrates logits using batch-level class statistics, dynamically suppressing overconfidence and confirmation bias.
    \item We establish LumiNet’s superiority across multiple teacher-student pairs (e.g., ResNet-50→MobileNet-V2) and diverse tasks (classification, detection, transfer learning), achieving a 70.97\% accuracy on CIFAR-100 (+3.62\% over standard KD). 
\end{itemize}

\section{Related Works}
\noindent\textbf{Logit-based KD:} In the domain of KD, logit-based techniques have traditionally emphasized the distillation process utilizing solely the output logits. Historically, the primary focus of research within logit distillation has been developing and refining regularization and optimization strategies rather than exploring novel methodologies. Noteworthy extensions to this conventional framework include the mutual-learning paradigm, frequently referenced as DML \citep{r10}, and incorporating the teacher assistant module, colloquially termed TAKD \citep{r11}. Nonetheless, a considerable portion of the existing methodologies remain anchored to the foundational principles of the classical KD paradigm, seldom probing the intricate behaviors and subtleties associated with logits \citep{r12}. A novel approach to object detection distillation, combining feature-based and logit-based methods with a closed-loop knowledge distillation framework, has demonstrated improved accuracy and robustness compared to existing state-of-the-art techniques \citep{r91}. Recently, TTM \citep{r97} has been proposed as a method that introduces Rényi entropy regularization into the KD objective by removing temperature scaling on the student side and reinterpreting it as a power transformation of probability distributions. This regularization implicitly enhances generalization performance. However, TTM does not model inter-instance dependencies or use batch-wise statistical structures for knowledge transfer and also learns from raw logits like other methods. While the versatility of these logit-based methods facilitates their applicability across diverse scenarios, empirical observations suggest that their efficacy often falls short when juxtaposed against feature-level distillation techniques.

\noindent\textbf{Feature-based KD:} Feature distillation, a knowledge transfer strategy, focuses on utilizing intermediate features to relay knowledge from a teacher model to a student model. State-of-the-art methods have commonly employed this technique, with some working to minimize the divergence between features of the teacher and student models \citep{r13,r14,r19}. A richer knowledge transfer is facilitated by forcing the student to mimic the teacher at the feature level. Others have extended this approach by distilling input correlations, further enhancing the depth of knowledge transfer \citep{r17,r20,r21,r62}. DiffKD \citep{r92}, a novel knowledge distillation method utilizing diffusion models to denoise and align student features with teacher features, has demonstrated state-of-the-art performance across image classification, object detection, and semantic segmentation tasks These methods, though high-performing, struggle with substantial computational demands and potential privacy issues, especially with complex models and large datasets. These challenges not only amplify processing time and costs but can also limit their practical applicability in real-world scenarios. Recognizing these challenges, we turn our attention to logit-based distillation techniques.

\noindent\textbf{Applications with KD:}
Rooted in foundational work by \citep{r1} and further enriched by advanced strategies like Attention Transfer \citep{r23}, ReviewKd \citep{r62}, Decoupled KD \citep{r12} and other methods \citep{r17,r20}, KD has significantly improved performance in core vision tasks, spanning recognition \citep{r37,r38,r39}, segmentation\citep{r58,r59}, and detection \citep{r2,r3,r4,r5}. Beyond vision, KD has also made notable strides in NLP tasks like machine translation and sentiment analysis \citep{r60,r61}. KD has proven valuable in addressing broader AI challenges, such as reducing model biases \citep{r26,r27,r28,r29} and strengthening common-sense reasoning \citep{r25}. We assess our method in the contexts of image classification and object detection.

\section{Methodology}

\subsection{Knowledge Distillation Revisited}
KD aims to transfer knowledge from a high-capacity teacher model \(f_T\) to a compact student \(f_S\) by minimizing the divergence between their outputs. Let \(\mathcal{X} = \{\mathbf{x}_i\}_{i=1}^n\) denote a dataset where \(\mathbf{x}_i \in \mathbb{R}^m\), and let \(\mathbf{z}_i^T = f_T(\mathbf{x}_i)\) and \(\mathbf{z}_i^S = f_S(\mathbf{x}_i)\) represent the logits (pre-softmax outputs) of the teacher and student for sample \(\mathbf{x}_i\). Traditional KD minimizes the Kullback-Leibler (KL) divergence between the softened output distributions of \(f_T\) and \(f_S\):  
\[
\mathcal{L}_{\text{KD}} = \sum_{\mathbf{x}_i \in \mathcal{X}} \text{KL}\left( \sigma\left(\mathbf{z}_i^T / \tau \right) \, \big|\big|\, \sigma\left(\mathbf{z}_i^S / \tau \right) \right),
\]  
where \(\sigma\) denotes the softmax function and \(\tau > 0\) is a temperature parameter that smooths the distributions to amplify ''dark knowledge" in non-target classes \citep{r1,r12}.  

\textbf{Limitations of Logit-Based KD:}   
 Despite its simplicity, logit-based KD faces a few critical challenges:  

1. \textit{Overconfidence}:  
   Neural-network models often exhibit \textit{overconfident predictions}, where the softmax probability for the target class approaches 1 (\(\sigma(\mathbf{z}_i^T)_t \to 1\)), suppressing non-target class probabilities. This diminishes the dark knowledge crucial for student learning \citep{r12,r95}. 
   
2. \textit{Confirmation Bias}:  
   Students trained on overconfident teachers inherit errors through \textit{confirmation bias} \citep{r95}. Once the student aligns with a teacher’s incorrect prediction, subsequent training struggles to correct this due to steep loss gradients around overconfident logits (see Section 3.3).  

3. \textit{Temperature Sensitivity}:  
   While temperature scaling (\(\tau\)) mitigates overconfidence by softening outputs, tuning \(\tau\) is non-trivial. Small \(\tau\) preserves overconfidence, while large \(\tau\) over-flattens distributions, erasing inter-class relationships \citep{r64,r65}. Prior work \citep{r66} notes that optimal \(\tau\) varies across tasks and architectures, necessitating costly per-dataset tuning.  

4.    \textit{Mismatched Capacity and Multi-Objective Complexities}:
Leading logit-based distillation methods (e.g., DKD \citep{r12} and MLLD \citep{r76}) often rely on multiple objective functions to capture both target and non-target class information. While this can yield richer student supervision, it also introduces additional hyperparameters and computational overhead, complicating large-scale or industrial deployment. Moreover, when the teacher is substantially larger than the student, capacity mismatches can arise, making it difficult for the student to effectively replicate the teacher’s output space \citep{r7}. This mismatch may undermine the advantages of having a more accurate teacher, as the student cannot fully utilize the additional knowledge.

\textbf{Why Logit-Based KD Still Matters:} 
Although feature-based KD techniques \citep{r19,r20} often yield higher accuracy by aligning student and teacher representations, they require internal teacher parameters and thus introduce privacy concerns (e.g., vulnerabilities to adversarial attacks on intermediate layers \citep{chakraborty2021survey}) as well as significant computational cost \citep{yang2023attention}). Logit-based KD, which relies on outputs alone, addresses these challenges by minimizing the need for teacher internals, making it particularly advantageous for privacy-sensitive setups (such as federated learning) and resource-constrained environments (like edge devices). Thus, improving logit-based KD is a technical challenge as well as a practical necessity for real-world scalability.

\subsection{Introducing LumiNet}

\begin{figure*}[!t]
    \centering
    \includegraphics[width=1\linewidth]{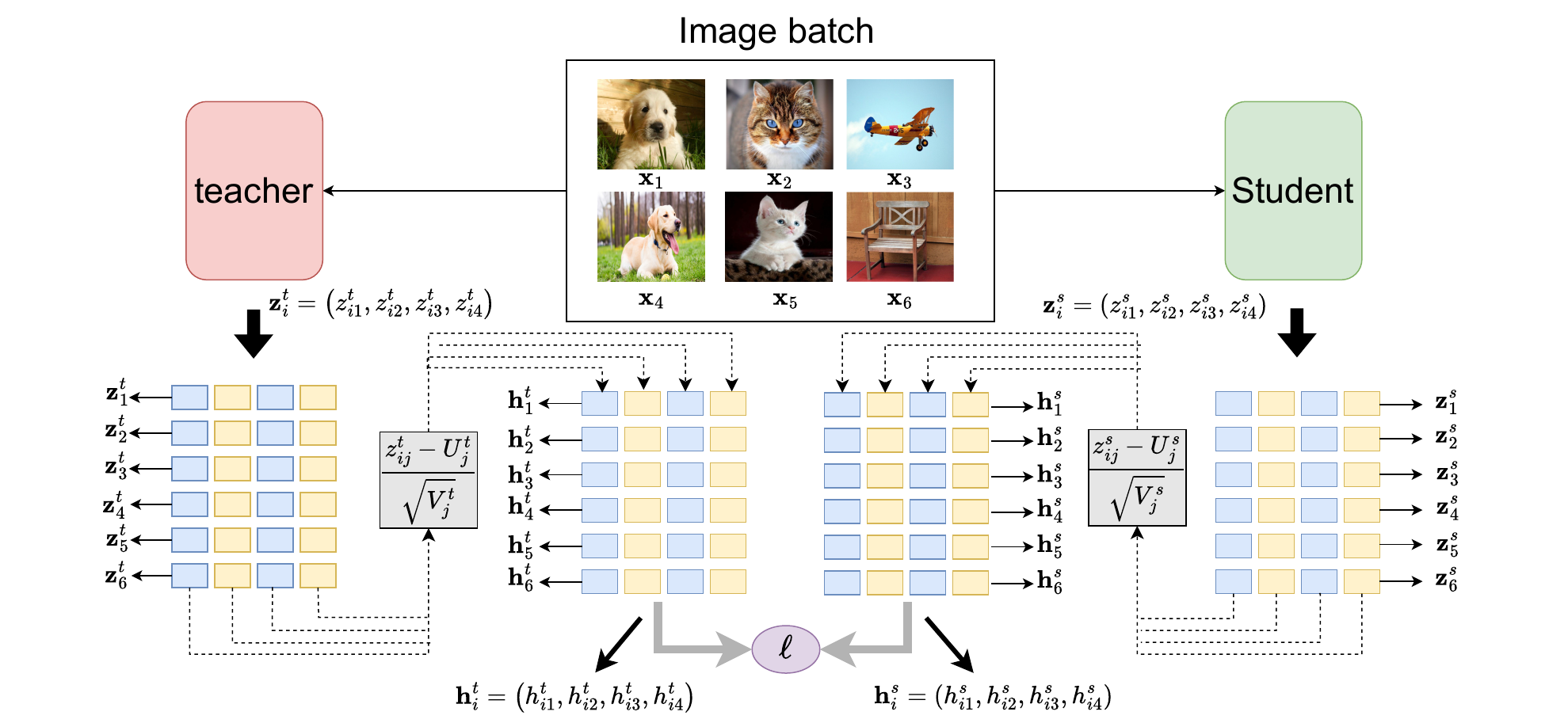} 
    \caption{ Given a batch of samples \(\mathcal{B}=\left \{ \textbf{x}_{1}, \textbf{x}_{2}, \textbf{x}_{3},\textbf{x}_{4},\textbf{x}_{5},\textbf{x}_{6} \right \}\), both the teacher and student models generate logit for each sample in the batch, denoted as \(\textbf{z}_{i}^{t}=\left ( z_{i1}^{t},z_{i2}^{t},z_{i3}^{t},z_{i4}^{t} \right )\) and \(\textbf{z}_{i}^{s}=\left ( z_{i1}^{s},z_{i2}^{s},z_{i3}^{s},z_{i4}^{s} \right )\). In the subsequent stage, $mean ((U_{j}^{t},U_{j}^{s}$)) and $variance (V_{j}^{t},V_{j}^{s}$) for each class in the batch are computed for both teacher and student logit. These values are then used to normalize the logit of both models, resulting in a new logit representation referred to as the Perception logit: \(\textbf{h}_{i}^{t}=\left ( h_{i1}^{t},h_{i2}^{t},h_{i3}^{t},h_{i4}^{t}  \right )\) and \(\textbf{h}_{i}^{s}=\left ( h_{i1}^{s},h_{i2}^{s},h_{i3}^{s},h_{i4}^{s} \right )\). Finally, a loss function \( \ell \) is calculated between the teacher and student to complete the knowledge distillation process.}
    \label{fig:my_label} 
\end{figure*}

Based on previous discussion, while point-wise knowledge distillation remains fundamental to transferring teacher knowledge, the key limitation lies in treating each instance's logits in isolation. We observe that a teacher's prediction for any instance carries implicit relationships with its predictions for other instances, particularly those sharing similar features or decision boundaries.

\textbf{Hypothesis:}
Traditional knowledge distillation treats logits as independent outputs, transferring knowledge on a per-instance basis. However, a model’s prediction for any instance is inherently shaped by the contextual relationships within a batch, particularly among instances sharing similar feature distributions or decision boundaries. This aligns with Kurt Lewin’s Field Theory from Gestalt Psychology \citep{r96}, which posits that an entity’s behavior is influenced by the surrounding forces in its environment. 

In the context of machine learning, we hypothesize that a model’s logits should not be interpreted in isolation but rather as influenced by the batch as a whole. Specifically, each logit is affected by positive forces (alignment with batch-wide consensus) and negative forces (suppression of outliers). This interplay ensures that distillation captures not only instance-level knowledge but also the structural dependencies across the batch. To formalize this, we introduce a perception function, denoted as:
\[
\tilde{z}_{ij} = \mathcal{F}(z_{ij}, \mathbf{Z}_{\mathcal{B}}),
\]
where:
- \( z_{ij} \) is the original logit for instance \( i \) and class \( j \),
- \( \mathbf{Z}_{\mathcal{B}} \) represents the set of logits across the batch,
- \( \mathcal{F}(\cdot) \) is a contextual transformation that adjusts each logit based on batch-level interactions.

The function \( \mathcal{F} \) encodes batch-aware knowledge by modulating each logit in proportion to the surrounding batch distribution. This ensures that extreme logits are tempered, reducing overconfidence, while preserving the mutual information across related instances.

\noindent\textbf{Constructing the perception:} To support our hypothesis, we present our approach as follows: 

Considering a batch of data samples $\mathcal{B}=\left \{ \textbf{x}_{i} \right \}_{i=1}^{b}$, which is randomly selected from the original dataset $\mathcal{X}$. Consequently, the logits generated by a model \(f\) for an instance \(\mathbf{x}_i \in \mathcal{B}\) across \(c\) classes are represented as: $ \mathbf{z}_i = (z_{i1}, z_{i2}, \ldots, z_{ic})$,
where \(z_{ik}\) symbolizes the logit for the \(j^{th}\) class for instance \(x_i\). We adjust the logits based on the mean \(U_{j}\) and variance \(V_{j}\) across each class \(j\) of a batch. This transformed logit is given by:
 
$h_{ij} = \frac{z_{ij} - U_{j}}{\sqrt{V_{j}}}$.
Here, \(h_{ij}\) represents the augmented logit for the \(j^{th}\) class for instance \(\mathbf{x}_i\). Consequently, the augmented logits for instance, \(\mathbf{x}_i\) are obtained as:
\begin{align}
\label{eqperception}
\textbf{h}_i &= \left( \frac{z_{i1} - U_{1}}{\sqrt{V_{1}}}, \frac{z_{i2} - U_{2}}{\sqrt{V_{2}}}, \ldots, \frac{z_{ic} - U_{c}}{\sqrt{V_{c}}} \right)
\end{align}
 
In this context, the reconstructed logits $\mathbf{h}_i$ serve as a normalized ``perception” of each instance, since every class’s logits are shifted and scaled by their batch‐wise mean $U_j$ and variance $V_j$. By enforcing $h_{ik} = \frac{z_{ik} - U_k}{\sqrt{V_k}}$, any overconfident $z_{ik} \gg z_{ij}$ is automatically dampened when $V_k$ is large, so that $\frac{h_{ik}}{h_{ij}} \ll \frac{z_{ik}}{z_{ij}}$. This re‐scaling redistributes information across the batch via intra‐class dependencies and yields more balanced softmax outputs. Consequently, when computing $\mathcal{L}_{\mathrm{LumiNet}}$, the KL divergence is applied to these calibrated distributions, making distillation less sensitive to temperature scaling.

The perception transformation establishes a principled optimization-theoretic framework for knowledge distillation through variance stabilization and gradient preconditioning. For teacher logits with heteroscedastic class-wise variances ${V_j}$ where $\frac{\max_j V_j}{\min_j V_j} = \kappa \gg 1$, the perception mapping $h_{ij} = \frac{z_{ij} - U_j}{\sqrt{V_j}}$ transforms the gradient operator as $\frac{\partial \mathcal{L}{\mathrm{LumiNet}}}{\partial h^S{ij}} = V_j^{-\frac{1}{2}} \cdot \frac{\partial \mathcal{L}{\mathrm{KD}}}{\partial z^S{ij}}$, inducing effective preconditioning $\tilde{H} = D^{-\frac{1}{2}} H_{\mathrm{KL}} D^{-\frac{1}{2}}$ where $D = \mathrm{diag}(\sqrt{V_1}, \ldots, \sqrt{V_C})$. This improves conditioning and accelerates convergence while ensuring uniform gradient variance $\mathrm{Var}\bigl[\frac{\partial \mathcal{L}{\mathrm{LumiNet}}}{\partial h^S{ij}}\bigr] = O(1)$ across classes. The approach complements \citep{pmlr-v139-menon21a}'s statistical analysis showing that soft label distillation reduces empirical risk variance, as our per-class normalization similarly enforces homoscedasticity through direct gradient preconditioning. This mechanism also relates to Neural Collapse \citep{ncollapse}, where diminishing within-class variability during terminal training results in improved conditioning and geometric regularity. The transformation preserves cross-class relationships while achieving $h^T_{ij} \sim \mathcal{N}(0,1)$ normalization.

\noindent\textbf{The LumiNet Loss:} Traditional logit-based KD methods align raw logits or softened probabilities \citep{r1,r12,r76,r97,r95}. In contrast, LumiNet aligns the \textit{perception logits} of teacher and student, ensuring the student learns the teacher’s contextualized decision patterns. The LumiNet loss \(\mathcal{L}_{\text{LumiNet}}\) is defined as the KL divergence between the softened perception logits of the teacher (\(\mathbf{h}^T\)) and student (\(\mathbf{h}^S\)):
\begin{equation}
\label{eq:lumiloss}
\mathcal{L}_{\text{LumiNet}} = \sum_{\mathbf{x}_i \in \mathcal{B}} \text{KL}\left( \sigma(\mathbf{h}_i^T / \tau) \, \big|\big|\, \sigma(\mathbf{h}_i^S / \tau) \right),
\end{equation}
where \(\tau\) is a temperature parameter. LumiNet derives its name from its ability to illuminate (`Lumi') the dark knowledge in logits via batch-level statistical perception. `Net' refers to the implicit network of relational connections formed among samples during distillation, capturing structural dependencies that guide more calibrated learning.

\noindent\textbf{Total Loss Formulation:} The complete training objective for our knowledge distillation framework combines the traditional cross-entropy loss \(\mathcal{L}_{CE}\) with our proposed LumiNet loss \(\mathcal{L}_{LumiNet}\). While \(\mathcal{L}_{CE}\) operates on the raw logits \(\mathbf{z}_i\) and ground truth labels \(y_i\) to ensure correct classification:
\begin{equation}
\mathcal{L}_{CE} = -\frac{1}{N}\sum_{i=1}^N \sum_{c=1}^C y_{ic} \log(\hat{y}_{ic})
\end{equation}
where $(\hat{y}_{ic})$ is the softmax probability, $(y_{ic})$ is the ground truth label, (N) is the batch size, and (C) is the number of classes. The \(\mathcal{L}_{LumiNet}\) term works with the perceived logits \(\mathbf{h}_i\) to transfer the teacher's perceptual knowledge to the student. The total loss is thus formulated as:
\begin{equation}
\mathcal{L}_{total} = \mathcal{L}_{CE} + \lambda \mathcal{L}_{LumiNet}
\end{equation}
where \(\lambda\) is a balancing scalar that controls the contribution of the LumiNet loss. This dual-objective optimization ensures that the student model learns to correctly classify instances through \(\mathcal{L}_{CE}\) and acquires the teacher's rich perceptual understanding through \(\mathcal{L}_{LumiNet}\). A theoretical validation can be found in Appendix \ref{PN}-\ref{EMP}.

\section{Experiments}
 

\subsection{Setup}

\noindent\textbf{Dataset:} Using benchmark datasets, we conducted experiments on three vision tasks: image classification, object detection, and transfer learning. Our experiments leveraged \textit{four} widely acknowledged benchmark datasets. 
First, \textbf{CIFAR-100} \citep{r55}, encapsulating a compact yet comprehensive representation of images, comprises 60,000 32x32 resolution images, segregated into 100 classes with 600 images per class. \textbf{ImageNet} \citep{r56}, a more extensive dataset, provides a rigorous testing ground with its collection of over a million images distributed across 1,000 diverse classes, often utilized to probe models for robustness and generalization. Concurrently, the \textbf{MS COCO} dataset \citep{r57}, renowned for its rich annotations, is pivotal for intricate tasks, facilitating both object detection and segmentation assessments with 330K images, 1.5 million object instances, and 80 object categories. We strictly adhered to standard dataset splits for reproducibility and benchmarking compatibility for training, validation, and testing. The \textbf{TinyImageNet}\footnote{\url{https://www.kaggle.com/c/tiny-imagenet}} dataset, although more compact, acts as an invaluable resource for transfer learning experiments due to its wide variety across its 200 classes. 
 
\
\begin{table}[!t]
    \centering
    \caption{Recognition results on the CIFAR-100 validation, averaged over five trials with standard deviation.}
    \label{tbl:both}
   \vspace{-1em}\begin{subtable}[t]{0.64\textwidth}
        \centering
        \caption{Same architecture}
         \vspace{-.75em}\scalebox{.7}{\begin{tabular}{c|cccccc} 
\toprule
\multirow{2}{*}{\textbf{Teacher}} & \rotatebox{90}{\textbf{RN56}} & \rotatebox{90}{\textbf{RN110}} & \rotatebox{90}{\textbf{RN32$\times$4}} & \rotatebox{90}{\textbf{WRN40-2}} & \rotatebox{90}{\textbf{WRN40-2}} & \rotatebox{90}{\textbf{VGG13}} \\
                                           & 72.34 & 74.31 & 79.42 & 75.61 & 75.61 & 74.64 \\
\hline
\multirow{2}{*}{\textbf{Student}} & \rotatebox{90}{\textbf{RN20}} & \rotatebox{90}{\textbf{RN32}} & \rotatebox{90}{\textbf{RN8$\times$4}} & \rotatebox{90}{\textbf{WRN16-2}} & \rotatebox{90}{\textbf{WRN40-1}} & \rotatebox{90}{\textbf{VGG8}} \\
                                           & 69.06 & 71.14 & 72.50 & 73.26 & 71.98 & 70.36 \\ 
\midrule
\multicolumn{7}{c}{\textbf{ Feature-Based Methods}} \\
\hline
FitNet~\citep{r19} & 69.21 & 71.06 & 73.50 & 73.58 & 72.24 & 71.02 \\
RKD~\citep{r17} & 69.61 & 71.82 & 71.90 & 73.35 & 72.22 & 71.48 \\
CRD~\citep{r20} & 71.16 & 73.48 & 75.51 & 75.48 & 74.14 & 73.94 \\
OFD~\citep{r14} & 70.98 & 73.23 & 74.95 & 75.24 & 74.33 & 73.95 \\
ReviewKD~\citep{r62} & 71.89 & 73.89 & 75.63 & 76.12 & 75.09 & 74.84 \\ 
FCFD\citep{r88} & 71.68 & - & 76.80 & 76.34 & 75.43 & 74.86 \\
BDCKD\citep{10889695} & - & - & 77.25 & - & 74.98 & 74.73 \\

\midrule
\multicolumn{7}{c}{\textbf{ Logit-Based Methods}} \\
\hline
KD\citep{r1} & 70.66 & 73.08 & 73.33 & 74.92 & 73.54 & 72.98 \\
DML\citep{r10} & 69.52 & 72.03 & 72.12 & 73.58 & 72.68 & 71.79 \\
TAKD\citep{r11} & 70.83 & 73.37 & 73.81 & 75.12 & 73.78 & 73.23 \\
DKD\citep{r12} & 71.97 & 74.11 & 76.32 & 76.24 & 74.81 & 74.68 \\
TTM\citep{r97} & 71.83 & 73.97 & 76.17 & 76.23 & 74.32 & 74.33 \\
\textbf{Ours} & \textbf{72.29} & \textbf{74.2} & \textbf{77.50} & \textbf{76.38} & \textbf{75.12} & \textbf{74.94} \\
  & \textcolor{red}{(0.11)} & \textcolor{red}{(0.23)} & \textcolor{red}{(0.19)} & \textcolor{red}{(0.32)} & \textcolor{red}{(0.13)} & \textcolor{red}{(0.16)} \\
\hline
$\Delta$ & \textcolor[rgb]{0,0.502,0}{+1.63} & \textcolor[rgb]{0,0.502,0}{+1.12} & \textcolor[rgb]{0,0.502,0}{+4.17} & \textcolor[rgb]{0,0.502,0}{+1.37} & \textcolor[rgb]{0,0.502,0}{+1.58} & \textcolor[rgb]{0,0.502,0}{+1.96} \\
\bottomrule
\end{tabular}}
        \label{subtbl:first}
    \end{subtable}
    \begin{subtable}[t]{0.35\textwidth}
        \centering
        \caption{Different architecture}
        \vspace{-.75em}\scalebox{.7}{\begin{tabular}{ccccc} 
\toprule
\rotatebox{90}{\textbf{RN32×4}}               & \rotatebox{90}{\textbf{WRN40-2}}                 & \rotatebox{90}{\textbf{VGG13}}                    & \rotatebox{90}{\textbf{RN50}}                 & \rotatebox{90}{\textbf{RN32×4}}               \\
79.42                             & 75.61                             & 74.64                             & 79.34                             & 79.42                             \\ 
\hline
\addlinespace[1.1em]
\rotatebox{90}{\textbf{SN-V1}}            & \rotatebox{90}{\textbf{SN-V1}}            & \rotatebox{90}{\textbf{MN-V2}}             & \rotatebox{90}{\textbf{MN-V2}}             & \rotatebox{90}{\textbf{SN-V2}}            \\
70.50                             & 70.50                             & 64.60                             & 64.60                             & 71.82                             \\ 
\midrule
\multicolumn{5}{c}{\textbf{ Feature-Based Methods}}                                                                                                                               \\ 
\hline
73.59 & 73.73  & 64.14 & 63.16 & 73.54 \\
72.28 & 72.21  & 64.52 & 64.43 & 73.21 \\
75.11 & 76.05 & 69.73 & 69.11 & 75.65 \\
75.98 & 75.85 & 69.48 & 69.04 & 76.82 \\
77.45 & 77.14 & 70.37 & 69.89 & 77.78 \\ 
78.12 & 77.81 & 70.67 & 71.07 & 78.20 \\
77.20 & 76.86 & -     & 70.72 & 78.05 \\
\midrule
\multicolumn{5}{c}{\textbf{ Logit-Based Methods}}                                                                                                                                 \\ 
\hline
74.07 & 74.83 & 67.37 & 67.35 & 74.45 \\
72.89 & 72.76 & 65.63 & 65.71 & 73.45 \\
74.53 & 75.34 & 67.91 & 68.02 & 74.82 \\
76.45 & 76.70 & 69.71 & 70.35 & 77.07 \\
74.18 & 75.39 & 68.98 & 69.24 & 76.57 \\
\textbf{76.66} & \textbf{76.95} & \textbf{70.50} & \textbf{70.97} & \textbf{77.55} \\

 \textcolor{red}{(0.08)} & \textcolor{red}{(0.15)} & \textcolor{red}{(0.10)} & \textcolor{red}{(0.12)} & \textcolor{red}{(0.21)} \\
\hline
\textcolor[rgb]{0,0.502,0}{+2.59} & \textcolor[rgb]{0,0.502,0}{+2.12} & \textcolor[rgb]{0,0.502,0}{+3.13} & \textcolor[rgb]{0,0.502,0}{+3.62} & \textcolor[rgb]{0,0.502,0}{+3.1}  \\
\bottomrule
\end{tabular}}
        
        \label{subtbl:second}
    \end{subtable}
    
\end{table}

\noindent\textbf{Network architectures:} Various architectures are employed depending on the context. For CIFAR-100, homogeneous configurations use teacher models such as ResNet56, ResNet110 \citep{r39}, and WRN-40-2, paired with corresponding students such as ResNet20 and WRN-16-2 (Table \ref{subtbl:first}). In heterogeneous settings, architectures such as ResNet32$\times$4 and VGG13 \citep{r38} for teachers are paired with lightweight models like ShuffleNet-V1, ShuffleNet-V2 \citep{r71} and MobileNet-V2 \citep{r72} as students (Table \ref{subtbl:second}). For ImageNet classification, ResNet34 was employed as the teacher and ResNet18 as the student. Additionally, for object detection on MS-COCO, Faster RCNN with FPN \citep{r61} was utilized as a feature extractor, with predominant teacher models being ResNet variants, while the latter served as a student. A pre-trained WRN\_16\_2 model is further harnessed for transfer learning. We also performed tests on ViT models \citep{r24}. DeiT-Ti \citep{r78}, PiT-Ti\citep{r844}, PVT-Ti\citep{r85}, and PVTv2-B0\citep{r86} served as student models, with ResNet50 acting as the teacher model.

\noindent\textbf{Evaluation metric:} We assess methods' performance using Top-1 and Top-5 accuracy for classification tasks. We employ Average Precision (AP, AP50, and AP70) to gauge precision levels in object detection tasks. We calculate a $\Delta$ that denotes the performance improvement of LumiNet over the classical KD method, underlining the enhancements of our approach.

\noindent\textbf{Implementation details:}
We investigate knowledge distillation with two configurations: a homogeneous architecture (ResNet56 as teacher and ResNet20 as student) and a heterogeneous architecture (ResNet32x4 as teacher and ShuffleNet-V1 as student). The study includes various neural networks like ResNet, WRN, VGG, ShuffleNet, and MobileNetV2. Training parameters are: for CIFAR-100, batch size 64 and learning rate 0.05; for ImageNet, batch size 128 and learning rate 0.1; for MS-COCO, batch size 8 and learning rate 0.01. 
We followed the implementation settings of \citet{r12}. To implement distillation in the ViT variant, we adopted the implementation settings detailed by \citet{r79}. All models are trained on a single GPU. Detailed implementation for each task can be found in Appendix \ref{ID}.

\subsection{Main Results}

\noindent\textbf{Comparison methods:} We compare our method with well-established feature- and logit-based distillation methods, underscoring its potential and advantages in the knowledge distillation domain. Notable methods in the \textit{Feature-Based Methods} category include FitNet~\citep{r19}, which aligns features at certain intermediary layers, and CRD~\citep{r20}, which minimizes contrastive loss between teacher and student representations. RKD~\citep{r17},  and our proposed LumiNet both aim to transfer relational knowledge by modeling the relationships between samples. However, we categorize RKD within a feature-based method group since it utilizes pooled embeddings from both teacher and student networks. It calculates pairwise distances and triplet-wise angles in the feature space, aligning these structures using smooth L1 loss. In contrast, LumiNet simply operates in the \textit{logit space}. Other feature-based methods include OFD~\citep{r7} and ReviewKD~\citep{r62}, each bringing unique strategies to leverage intermediary network features. \textit{Logit-Based Methods} include KD~\citep{r1}, DML~\citep{r10}, TAKD~\citep{r11}, and DKD~\citep{r12}, which ensure that the student's logits closely match the teacher’s output.

\begin{table}[!t]
\centering
\caption{Reported are the Top-1 and Top-5 accuracy (\%) on ImageNet validation. }
\label{imagenet_combined}
\scalebox{0.8}{\begin{tabular}{ccc|cccc|cccc|cc} 
\hline
      &         &         & \multicolumn{4}{c|}{Feature-Based Methods} & \multicolumn{6}{c}{Logit-Based Methods}                    \\ 
\hline
\multicolumn{13}{c}{\textbf{\textbf{ResNet34 (Teacher) and ResNet18 (Student)}}}                                                    \\ 
\hline
      & Teacher & Student & AT    & OFD~& CRD  & ReviewKD~         & KD~   & DML   & TAKD  & DKD   & \textbf{Ours}  & $\Delta$  \\ 
\hline
Top-1 & 73.31   & 69.75   & 70.69 & 70.69 & 70.81 & 71.17              & 70.66 & 70.82 & 70.78 & 71.70 & \textbf{72.16} & \textcolor[rgb]{0,0.502,0}{+1.5}      \\
Top-5 & 91.42   & 89.07   & 90.01 & 90.01 & 89.98 & 90.13              & 89.88 & 90.02 & 90.16 & 90.41 & \textbf{90.60} & \textcolor[rgb]{0,0.502,0}{+0.72}     \\ 
\hline
\multicolumn{13}{c}{\textbf{ResNet50 (Teacher) and MobileNet-V2 (Student)}}                                                         \\ 
\hline
Top-1 & 76.16   & 68.87   & 69.56 & 71.25 & 71.37 & 72.56              & 70.50 & 71.35 & 70.82 & 72.05 & \textbf{72.55} & \textcolor[rgb]{0,0.502,0}{+2.05}     \\
Top-5 & 92.86   & 88.76   & 89.33 & 90.34 & 90.41 & 91.00              & 89.80 & 90.31 & 90.01 & 91.05 & \textbf{91.12} & \textcolor[rgb]{0,0.502,0}{+1.32}     \\
\hline
\end{tabular}}
\end{table}

\begin{table*}[!t]
\centering
\caption{Comparison of training time per batch (Latency), number of extra parameters ($\theta$), and accuracy on CIFAR-100, and detection results on MS-COCO using Faster-RCNN-FPN \citep{r68}.}
\label{tab:combined}
\begin{subtable}{0.35\textwidth}
\centering
\caption{Efficiency Comparison on CIFAR-100}
\scalebox{.6}{\begin{tabular}{lccc} 
\toprule
Method & Lat(ms)$\downarrow$  & $\theta$~$\downarrow$ & Acc$\uparrow$ (\%) \\
\midrule
KD & 11 & 0 & 73.33 \\
RKD & 25 & 0 & 71.90 \\
FitNet & 14 & 16.8K & 73.50 \\
OFD & 19 & 86.9K & 74.95 \\
CRD & 41 & 12.3M & 75.51 \\
ReviewKd & 26 & 1.8M & 75.63 \\
DkD & 11 & 0 & 76.32 \\
\textbf{Ours} & \textbf{11} & \textbf{0} & \textbf{77.50} \\
\bottomrule
\end{tabular}}
\end{subtable}
\hfill
\begin{subtable}{0.6\textwidth}
\centering
\caption{Detection results on MS-COCO using Faster-RCNN-FPN}
\scalebox{0.6}{\begin{tabular}{ccc|ccc|ccc|cc} 
\hline
\multicolumn{1}{l}{} & \multicolumn{1}{l}{} & \multicolumn{1}{l|}{} & \multicolumn{3}{l|}{Feature-Based Methods} & \multicolumn{5}{l}{Logit-Based Methods}            \\ 
\hline
\multicolumn{11}{c}{\textbf{ResNet101 (Teacher) and ResNet18 (Student)}}                                                                                              \\ 
\hline
                     & Teacher              & Student               & FitNet & FGFI           & ReviewKD         & KD                        & TAKD                      & DKD                        & Ours           & $\Delta$  \\ 
\hline
AP                   & 42.04                & 33.26                 & 34.13  & 35.44          & \textbf{36.75}   & 33.97 & 34.59 & 35.05 & 35.34          & \textcolor[rgb]{0,0.502,0}{+1.37}     \\
$AP_{50}$              & 62.48                & 53.61                 & 54.16  & 55.51          & 56.72            & 54.66 & 55.35 & 56.60 & \textbf{56.82} & \textcolor[rgb]{0,0.502,0}{+2.16}     \\
$AP_{75}$              & 45.88                & 35.26                 & 36.71  & \textbf{38.17} & 34.00            & 36.62 & 37.12 & 37.54 & 37.56          & \textcolor[rgb]{0,0.502,0}{+0.94}     \\ 
\hline
\multicolumn{11}{c}{\textbf{ResNet50 (Teacher) and MobileNet-V2 (Student)}}                                                                                           \\ 
\hline
AP                   & 40.22                & 29.47                & 30.20  & 31.16          & \textbf{33.71}   & 30.13 & 31.26 & 32.34 & 32.38          & \textcolor[rgb]{0,0.502,0}{+2.25}     \\
$AP_{50}$              & 61.02                & 48.87                 & 49.80  & 50.68          & 53.15            & 50.28 & 51.03 & 53.77 & \textbf{53.84} & \textcolor[rgb]{0,0.502,0}{+3.56}     \\
$AP_{75}$              & 45.88                & 30.90                 & 31.69  & 32.92          & 36.13            & 31.35 & 33.46 & 34.01 & 33.57          & \textcolor[rgb]{0,0.502,0}{+2.22}     \\
\hline
\end{tabular}}
\end{subtable}
\end{table*}
 
\begin{table}[t]
\centering
\caption{GLUE dev set results of BERT\textsubscript{6}-based student models with various KD methods.}
\label{tab:resultsnlp}
\scalebox{0.7}{\begin{tabular}{l|c|cccccccc|c}
\hline
Model & \#Params & \begin{tabular}{c} CoLA \\ (Mcc) \end{tabular} & \begin{tabular}{c} MNLI-(m/mm) \\ (Acc) \end{tabular} & \begin{tabular}{c} SST-2 \\ (Acc) \end{tabular} & \begin{tabular}{c} QNLI \\ (Acc) \end{tabular} & \begin{tabular}{c} MRPC \\ (F1) \end{tabular} & \begin{tabular}{c} QQP \\ (Acc) \end{tabular} & \begin{tabular}{c} RTE \\ (Acc) \end{tabular} & \begin{tabular}{c} STS-B \\ (Spear) \end{tabular} & Avg \\
\hline
BERT$_{\text{base}}$ (Teacher) & 110M & 60.3 & 84.9/84.8 & 93.7 & 91.7 & 91.4 & 91.5 & 69.7 & 89.4 & 84.1 \\
BERT$_{\text{6}}$ (Student) & 66M & 51.2 & 81.7/82.6 & 91.0 & 89.3 & 89.2 & 90.4 & 66.1 & 88.3 & 80.9 \\
\hline
KD\citep{r1} & 66M & 53.6 & 82.7/83.1 & 91.1 & 90.1 & 89.4 & 90.5 & 66.8 & 88.7 & 81.6 \\
PD\citep{Turc2019WellReadSL} & 66M & -- & 82.5/83.4 & 91.1 & 89.4 & 89.4 & 90.7 & 66.7 & -- & -- \\
PKD\citep{sun-etal-2019-patient} & 66M & 45.5 & 81.3/-- & 91.3 & 88.4 & 85.7 & 88.4 & 66.5 & 86.2 & 79.2 \\
TinyBERT\citep{jiao-etal-2020-tinybert} & 66M & 53.8 & 83.1/83.4 & 92.3 & 89.9 & 88.8 & 90.5 & 66.9 & 88.3 & 81.7 \\
CKD\citep{park-etal-2021-distilling} & 66M & 55.1 & 83.6/84.1 & 93.0 & 90.5 & 89.6 & 91.2 & 67.3 & 89.0 & 82.4 \\
MGSKD\citep{liu-etal-2022-multi-granularity} & 66M & 49.1 & 83.3/83.9 & 91.7 & 90.3 & 89.8 & 91.2 & 67.9 & 88.5 & 81.5 \\
\hline
Ours & 66M & \textbf{55.8} & \textbf{83.72/84.23} &  91.3 & \textbf{90.7} & \textbf{89.9} & \textbf{91.6} & \textbf{69.7} & \textbf{89.3} & \textbf{83.0} \\
\midrule
\textit{$\Delta$ } & -- & \textcolor{green}{+2.2} & \textcolor{green}{+1.02/+1.13} & \textcolor{green}{+0.2} & \textcolor{green}{+0.6} & \textcolor{green}{+0.5} & \textcolor{green}{+1.1} & \textcolor{green}{+2.9} & \textcolor{green}{+0.6} & \textcolor{green}{+1.4} \\
\hline
\end{tabular}}
\end{table}

\begin{table}[h]
\centering
\caption{Accuracy and ECE on CIFAR-100 for different teacher-student distillation setups and methods.}
\label{tab:fl_loss}
\scalebox{0.8}{\begin{tabular}{llcccccc}
\toprule
\textbf{Teacher} & \textbf{Student} & \textbf{KD} & \textbf{KD+FL} & \textbf{KD+DFL} & \textbf{FCUC} & \textbf{Ours+DFL} & \textbf{Ours} \\
                &                  &     &      & ACC $\uparrow$ / ECE $\downarrow$       &      &           &      \\
\midrule
RN32x4 & RN8x4     & 73.33 / 0.11 & 74.41 / 0.10 & 74.51 / 0.09 & 76.37 / 0.09 & 76.50 / 0.07 & \textbf{77.50} / \textbf{0.06} \\
VGG13      & VGG8          & 72.98 / 0.12 & 73.88 / 0.11 & 73.39 / 0.10 & 74.28 / 0.06 & 74.66 / 0.07 & \textbf{74.94} / \textbf{0.06} \\
WRN40-2      & SNV1   & 74.83 / 0.13 & 75.26 / 0.12 & 75.11 / 0.11 & 75.02 / 0.09 & \textbf{76.98} / \textbf{0.04} & 76.95 / 0.07 \\
\bottomrule
\end{tabular}}
\end{table}

\noindent\textbf{Recognition tasks:} 
We perform image recognition tasks on CIFAR-100 and ImageNet. On \textbf{CIFAR-100}, when teacher and student models shared identical architectures, shown in Table \ref{subtbl:first}, LumiNet presented improvements of 2-3\%. And when the architectures were from different series, shown in Table \ref{subtbl:second}, the improvements were between 3-4\%, consistently outperforming the baseline, classical KD, and other methods rooted in KD's principles. Similarly, on the intricate \textbf{ImageNet} dataset, LumiNet outshined all logit-based distillation techniques and beat state-of-the-art feature-based distillation methods, shown in Table \ref{imagenet_combined}. These results consistently demonstrate that, regardless of variations in the dataset or architectural differences, LumiNet performs exceptionally well. In particular, it highlights the distinctive ability of LumiNet to learn based on the concept of `perception. In Table \ref{tab:combined}(a), LumiNet shows a superior trade-off between extra parameters/running time and precision. It achieves 11 ms latency, matching the best-performing models in speed, and operates efficiently at 77.50\% accuracy without extra parameters.

\noindent\textbf{Detection task:} The quality of deep features is crucial for accurate object detection. One persistent challenge is effective knowledge transfer between established teacher models and student detectors \citep{r74}. Generally, logits cannot provide knowledge for object localization \citep{r73}.  Although logit-based techniques have traditionally been used for this, they often do not meet state-of-the-art standards. On \textbf{MS COCO} dataset, LumiNet delivered noticeably better results (Table \ref{tab:combined}(b)) compared to logit-based methods, which are comparable to feature-based methods. Also, it is possible to enhance accuracy through hyperparameter tuning. Additionally, we enhance our approach by integrating a feature-based technique. The combination of these two methods yields state-of-the-art results (Table \ref{tab:detection}), as detailed in appendix \ref{FB}.

\noindent\textbf{GLUE Benchmark:} Our proposed method demonstrates consistent and robust improvements across diverse NLP tasks on the GLUE benchmark. As shown in Table~\ref{tab:resultsnlp}, our BERT\textsubscript{6}-based student model achieves an average score of 83.0, outperforming traditional KD~\citep{r1} by +1.4 points and surpassing other competitive baselines such as CKD~\citep{park-etal-2021-distilling} and TinyBERT~\citep{jiao-etal-2020-tinybert}. The method shows strong generalization across various task types—including classification (SST-2, QNLI), similarity (STS-B), and entailment (RTE). It shows noticeable improvements on challenging tasks like CoLA (+2.2) and RTE (+2.9), suggesting that it can effectively transfer relational patterns and linguistic details from the teacher model.

\noindent\textbf{Transfer learning task:} To assess the transferability of deep features, we carry out experiments to verify the superior generalization capabilities of our algorithm LumiNet. In this context, we used the Wide Residual Network (WRN-16-2), distilled from WRN-40-2, as our principal feature extraction apparatus. Subsequently, sequential linear probing tasks were performed on the benchmark downstream dataset, notably \textit{Tiny-ImageNet}. Our empirical results, delineated in Figure. \ref{fig:my_label}(a), manifestly underscore the exemplary transferability of features cultivated through LumiNet.

\noindent\textbf{Effect of Strong Augmentation:} In Table \ref{multi}, we report performance after using auto-augmentation by increasing the complexity of training samples \citep{r80}. LumiNet outperforms auto augmentation-based method \citep{r76} in heterogeneous and homogeneous settings on the CIFAR-100 dataset. The results show our effectiveness in distilling knowledge from challenging samples.

\noindent\textbf{Effect of Calibration:}
Table~\ref{tab:fl_loss} presents a comparison of various calibration-enhanced knowledge distillation methods on CIFAR-100. While traditional KD~\citep{r1} serves as the baseline, all other methods—Focal Loss (FL), Dual Focal Loss (DFL)~\citep{10.5555/3495724.3497006,pmlr-v202-tao23a}, Feature Clipping for Uncertainty Calibration (FCUC)~\citep{tao2025feature}, and our proposed method (LumiNet)—introduce calibration either through loss functions or post-logit transformations. FL and DFL replace the standard cross-entropy with calibrated loss terms during training, while FCUC and LumiNet operate directly on the logits, making them modular and compatible with other loss functions. Notably, all calibrated methods outperform vanilla KD in both accuracy and ECE, indicating that improving the model’s confidence alignment is critical in student training. Our approach (Ours), which uses batch-level relational calibration of logits, achieves the best overall performance, reinforcing the necessity of integrating calibration mechanisms into the distillation pipeline for better generalization and reliability.

\begin{table*}[!t]
\centering
\begin{minipage}{0.40\linewidth}
    \centering
    \caption{Results after applying Auto Augmentation.}
    \label{multi}
    \scalebox{0.6}{
    \begin{tabular}{lccccc} 
    \toprule
    Teacher  & WRN40\_2     & WRN40\_2     & VGG 13         & RN32×4     & WRN40-2        \\
    Accuracy & 75.61          & 75.61          & 74.64          & 79.42          & 75.61           \\
    Student  & WRN16\_2     & WRN40\_1     & VGG 8          & SN-V2  & SN-V1   \\
    Accuracy & 73.26          & 71.98          & 70.36          & 71.82          & 70.50           \\ 
    \midrule
    MLLD     & 76.63          & 75.35          & 75.18          & 78.44          & 77.44           \\
    Ours     & \textbf{76.91} & \textbf{76.01} & \textbf{75.57} & \textbf{79.12} & \textbf{77.97}  \\
    $\Delta$ & \textcolor[rgb]{0,0.502,0}{+0.28} & \textcolor[rgb]{0,0.502,0}{+0.66} & \textcolor[rgb]{0,0.502,0}{+0.39} & \textcolor[rgb]{0,0.502,0}{+0.68} & \textcolor[rgb]{0,0.502,0}{+0.53}  \\
    \bottomrule
    \end{tabular}}
\end{minipage}
\hfill
\begin{minipage}{0.55\linewidth}
    \centering
    \caption{Top-1 mean accuracy (\%) comparison on CIFAR-100}
    \label{tab:vit}
    \scalebox{0.7}{
    \begin{tabular}{c|cccccc|cc} 
    \toprule
    Student  & Vanilla & KD    & AT    & SP   & LG   & AutoKD & Ours           &     $\Delta$  \\ 
    \hline
    DeiT-Ti  & 65.08   & 73.25 & 73.51 & 67.36 & 78.15 & 78.58  & \textbf{79.05} & \textcolor[rgb]{0,0.502,0}{+5.8}   \\
    PiT-Ti   & 73.58   & 75.47 & 76.03 & 74.97 & 78.48 & 78.51  & \textbf{79.80} & \textcolor[rgb]{0,0.502,0}{+4.33}  \\
    PVT-Ti   & 69.22   & 73.60 & 74.66 & 70.48 & 77.07 & 77.48  & \textbf{78.12} & \textcolor[rgb]{0,0.502,0}{+4.52}  \\
    PVTv2-B0 & 77.44   & 78.81 & 78.64 & 78.33 & 79.30 & 79.37  & \textbf{79.94} & \textcolor[rgb]{0,0.502,0}{+1.13}  \\
    \bottomrule
    \end{tabular}}
\end{minipage}
\end{table*}

\noindent\textbf{Vision Transformer:} To explore the capabilities of LumiNet beyond conventional ConvNet models, we performed experiments using different variants of vision transformers (ViT) in the CIFAR-100 dataset. We trained ViT with the optimal distiller obtained using ResNet-56 as a CNN teacher. Table \ref{tab:vit} presents the results of experiments that involve both vanilla and distillation models in a variety of distillation methods. The results indicate a notable improvement in the performance of vision transformers with the application of LumiNet, showcasing improvements ranging from 2\% to 14\% compared to vanilla. In particular, LumiNet consistently outperforms other methods, demonstrating improvements of 1 to 6\%  compared to KD, particularly. It is essential to emphasize that our approach, despite being a straightforward logit-based( soft logits) method in this context, proves to be more effective in transformer-based architectures compared to feature-based distillation methods.

\subsection{Confirmation Bias \& Calibration Analysis}

\begin{table}
\centering
\caption{Error rates by class and method for the top 10 most inaccurate classes of the Teacher models. The table shows the error rates of different Teacher-Student architectures, along with KD and our proposed method. An asterisk (*) indicates an error rate lower than the Teacher model's error rate. Our proposed method consistently outperforms KD across different architectures.}
\label{tab:prob}
\scalebox{0.65}{\begin{tabular}{lcccccccccc} 
\toprule
\multicolumn{11}{c}{Teacher (ResNet32x4) - Student(ResNet8x4)}                                                                                                                                                        \\ 
\hline
\textbf{Method} & \textbf{C35} & \textbf{C11} & \textbf{C46} & \textbf{C72} & \textbf{C74} & \textbf{C52} & \textbf{C64} & \textbf{C10} & \textbf{C55} & \textbf{C50}  \\ 
& \textbf{(Bee)} & \textbf{(Poppies)} & \textbf{(Castle)} & \textbf{(Girl)} & \textbf{(Woman)} & \textbf{(Mountain)} & \textbf{(Skunk)} & \textbf{(Orchids)} & \textbf{(Camel)} & \textbf{(Cloud)}  \\
\midrule
Teacher         & 45.0              & 43.0              & 42.0              & 41.0              & 40.0              & 39.0              & 38.0              & 37.0              & 36.0              & 34.0               \\ 
\hline
KD              & \textbf{43.0}*    & 49.0              & 47.0              & 55.0              & 47.0              & 40.0              & 43.0              & 42.0              & 49.0              & 38.0               \\
Ours            & 46.0              & \textbf{47.0}     & \textbf{38.0}*    & \textbf{52.0}     & \textbf{37.0}*    & \textbf{37.0}*    & \textbf{38.0}*    & \textbf{36.0}*    & \textbf{44.0}     & \textbf{38.0}      \\
\midrule
\multicolumn{11}{c}{Teacher (WideResNet-40-2) - Student(WideResNet-40-1)}                                                                                                                                                        \\ 
\hline
\textbf{Method} & \textbf{C72} & \textbf{C35} & \textbf{C55} & \textbf{C10} & \textbf{C50} & \textbf{C46} & \textbf{C64} & \textbf{C67} & \textbf{C11} & \textbf{C74}  \\ 
& \textbf{(Girl)} & \textbf{(Bee)} & \textbf{(Camel)} & \textbf{(Orchids)} & \textbf{(Cloud)} & \textbf{(Castle)} & \textbf{(Skunk)} & \textbf{(Snail)} & \textbf{(Poppies)} & \textbf{(Woman)}  \\
\hline
Teacher         & 51.0              & 50.0              & 48.0              & 47.0              & 46.0              & 45.0              & 44.0              & 44.0              & 43.0              & 43.0               \\ 
\hline
KD              & 52.0              & 55.0              & 48.0              & \textbf{42.0}*              & 48.0              & 39.0*             & 44.0              & 47.0              & 52.0              & 47.0               \\
Ours            & 53.0    & \textbf{54.0}    & \textbf{45.0}*    & 45.0    & \textbf{46.0}*    & \textbf{31.0}*    & \textbf{41.0}*    & \textbf{42.0}*    & \textbf{44.0}    & \textbf{45.0}     \\
\midrule
\multicolumn{11}{c}{Teacher (VGG13) - Student(VGG8)}                                                                                                                                                        \\ 
\hline
\textbf{Method} & \textbf{C35} & \textbf{C72} & \textbf{C55} & \textbf{C44} & \textbf{C46} & \textbf{C10} & \textbf{C25} & \textbf{C11} & \textbf{C74} & \textbf{C64}  \\ 
& \textbf{(Bee)} & \textbf{(Girl)} & \textbf{(Camel)} & \textbf{(Wolf)} & \textbf{(Castle)} & \textbf{(Orchids)} & \textbf{(Clock)} & \textbf{(Poppies)} & \textbf{(Woman)} & \textbf{(Skunk)}  \\
\hline
Teacher         & 54.0              & 53.0              & 50.0              & 48.0              & 47.0              & 46.0              & 46.0              & 45.0              & 45.0              & 43.0               \\ 
\hline
KD              & 53.0*              & 54.0              & 57.0             & 54.0              & 46.0              & 53.0 & 56.0              & 53.0              & 46.0              & 50.0               \\
Ours            & \textbf{53.0}*    & \textbf{49.0}*    & \textbf{45.0}*    & \textbf{51.0}    & \textbf{44.0}*    & \textbf{41.0}*    & \textbf{33.0}*    & \textbf{45.0}*    & \textbf{45.0}*    & \textbf{47.0}     \\
\bottomrule
\end{tabular}}
\end{table}
\textbf{Confirmation Bias Analysis:} To empirically validate our hypothesis about confirmation bias in knowledge distillation, we analyze the error rates across different classes, particularly focusing on the classes where the teacher model performs poorly. Table \ref{tab:prob} presents the error rates for the top 10 most challenging classes for the teacher model (ResNet32x4), comparing them with both traditional KD and our proposed method using ResNet8x4 as the student architecture.
The results provide strong evidence of confirmation bias in traditional KD. For most difficult classes where the teacher exhibits high error rates (ranging from 34\% to 45\%), the KD student model not only inherits these mistakes but often amplifies them. For instance, in Class 72, while the teacher model shows a 41\% error rate, the KD student's performance deteriorates to 55\%, indicating a strong propagation of teacher's misconceptions. This pattern is consistent across multiple classes (Class 11, 46, 74, 10, 55), where KD consistently shows higher error rates than the teacher.
In contrast, our proposed method demonstrates remarkable resilience to confirmation bias. In 6 out of 10 challenging classes (marked with asterisks), our approach achieves lower error rates than the teacher model, effectively breaking the cycle of error propagation. Most notably, in Class 46 and Class 74, our method reduces the error rates from 42\% and 40\% to 38\% and 37\% respectively, showing that the student can actually outperform the teacher in challenging cases. Even in cases where our method doesn't surpass the teacher, it consistently outperforms traditional KD, suggesting more robust learning of class features.

\noindent\textbf{Calibration Analysis:}
To evaluate the calibration of our models, we employ three widely accepted metrics: \textit{False Positive Rate at 95\% True Positive Rate (FPR95)}~\citep{r99}, \textit{Expected Calibration Error (ECE)}~\citep{r100}, and \textit{Maximum Calibration Error (MCE)}~\citep{r100}. 
\textit{FPR95} measures the false positive rate when the true positive rate is fixed at 95\%. It assesses the reliability of high-confidence predictions, with lower values reflecting fewer false positives at high recall. In multi-class settings, FPR95 is computed per class and averaged. ECE measures the average discrepancy between model confidence and accuracy, calculated as the weighted average of differences between bin accuracy and confidence. MCE identifies the maximum discrepancy across all bins, indicating the worst-case calibration error. Lower ECE and MCE values signify better calibration and reduced extreme miscalibration. Implementation details are in Appendix \ref{ID}.

We compare three methods: (1) a baseline trained with standard cross-entropy loss (CE), (2) a model trained using KD with cross-entropy and KL divergence applied to raw logits, and (3) our proposed method, which uses cross-entropy and KL divergence on perceived logits. As shown in Table~\ref{tab:entropy2}, our method achieves superior calibration performance across all metrics. Specifically, it reduces FPR95 by 23.46\% (from 3.58\% to 2.74\%) compared to the CE baseline and by 33.98\% (from 4.15\% to 2.74\%) compared to the KD baseline for ResNet8×4, indicating fewer false high-confidence predictions. Similarly, for VGG8, our method lowers FPR95 by 25.13\% (from 5.61\% to 4.20\%) compared to CE and 27.13\% (from 5.75\% to 4.20\%) compared to KD. Additionally, our method lowers ECE by 33.33\% (from 0.09 to 0.06) and MCE by 14.29\% (from 0.21 to 0.18) for ResNet8×4 compared to the CE baseline, demonstrating improved alignment between confidence and accuracy. For MobileNet-V2, the reductions in ECE and MCE are 47.06\% (from 0.17 to 0.09) and 44.74\% (from 0.38 to 0.21), respectively.



\subsection{Discussion}

\noindent\textbf{Achievments of LumiNet}
(1)\textit{Novel Conceptual Framework:} LumiNet introduces 'perception'—a novel approach to knowledge distillation that reconstructs logits by considering their relationships within a batch, rather than treating them in isolation. Grounded in Kurt Lewin's Field Theory, this method captures richer knowledge transfer patterns without needing intermediate features, bridging the gap between feature-based and logit-based methods.
(2)\textit{Technical Advantages}: Our method tackles challenges in knowledge distillation using statistical calibration. It reduces overconfidence and confirmation bias through normalized logits with batch-level context. Empirical results indicate a reduction of confirmation bias in error rates by up to 14\% in difficult classes compared to traditional KD.
(3)\textit{Practical Benefits:} LumiNet combines sophistication with practical efficiency, maintaining the same latency and computational overhead as vanilla KD (11ms). Its versatility across architectures (CNNs, Vision Transformers) and tasks (classification, detection, transfer learning) makes it ideal for industrial applications.

\noindent\textbf{Limiation:}
LumiNet has some drawbacks in spite of its advantages. In complex or multi-modal tasks where intermediate feature representations are essential, it might not perform likewise. The performance of the model with small or less diverse batches may be limited by its dependence on batch-level relationships. Although LumiNet has demonstrated impressive performance in computer vision tasks, it is still unclear if it can be applied to non-visual fields like natural language processing. 

\noindent\textbf{Future work:}
Future research could examine LumiNet's approach to KD outside of computer vision, as it is thought to have significant potential in other disciplines. The perception-based logit calibration technique could be used to improve the deployment and compression of large language models in resource-constrained environments. Furthermore, LumiNet could be used for continual learning settings to investigate how the method can aid in successful knowledge acquisition while avoiding catastrophic forgetting.

\section{Conclusion}

We propose LumiNet, a novel knowledge distillation method, which introduces a unique representation for instances through a concept we term 'perception.'In this novel representation, we depart from the fundamental philosophy of classical KD, which centers around extracting relative information from the teacher model. Within this framework, our main focus lies on addressing overconfidence issues to achieve improved optimization. It also tackles the capacity gap issue, where the student model struggles to learn due to the high variance in the teacher model’s logit distribution. In addition, we integrate statistical knowledge from other instances into an instance, resulting in a substantial improvement in accuracy compared to leading methods, which mitigates the problem of overconfidence and confirmation biases. Also, LumiNet demonstrates efficiency on par with traditional KD, solidifying its suitability for industry adoption. Our comprehensive empirical experiments, spanning recognition using both convnets and vision transformers, detection, and transfer learning, consistently highlight the superior performance of LumiNet.

\section{Acknowledgement}
The authors gratefully acknowledge the Machine Intelligence Lab (MILab), North South University, for their valuable support and resources that contributed to this research. 
\bibliography{main}
\bibliographystyle{tmlr}

\appendix
\section{Appendix}

\subsection{Preliminaries and Notation}\label{PN}

Let $\mathcal{X} \subset \mathbb{R}^d$ be an input space, and let $\mathcal{Y} = \{1,2,\dots, C\}$ be the set of $C$ classes. We have:

\begin{itemize}
\item A \emph{teacher} model $f_{T} : \mathcal{X} \to \mathbb{R}^C$, producing logits
\[
  \mathbf{z}^T(\mathbf{x}) \;=\; \bigl(z^T_1(\mathbf{x}),\,z^T_2(\mathbf{x}),\,\dots,\,z^T_C(\mathbf{x})\bigr).
\]
\item A \emph{student} model $f_{S} : \mathcal{X} \to \mathbb{R}^C$, producing logits
\[
  \mathbf{z}^S(\mathbf{x}) \;=\; \bigl(z^S_1(\mathbf{x}),\,z^S_2(\mathbf{x}),\,\dots,\,z^S_C(\mathbf{x})\bigr).
\]
\end{itemize}

For a mini-batch $B = \{\mathbf{x}_i\}_{i=1}^m \subset \mathcal{X}$ of size $m$, the teacher produces logits 
$$
\{\,z^T_j(\mathbf{x}_i)\,\}_{i=1}^m
\quad \text{for each class } j \in \{1,\dots,C\}.
$$
Analogous notation applies for the student. Denote
\[
\mu_j^T 
  \;=\; \frac{1}{m}\sum_{i=1}^m z^T_j(\mathbf{x}_i),
\quad\quad
\sigma_j^T
  \;=\; \sqrt{\frac{1}{m}\sum_{i=1}^m \Bigl[z^T_j(\mathbf{x}_i)-\mu_j^T\Bigr]^2 + \varepsilon},
\]
where $\varepsilon>0$ is a small constant (e.g., $10^{-5}$) to avoid division by zero. 
Likewise, let $\mu_j^S$ and $\sigma_j^S$ be the analogous means and (biased) standard deviations of the student logits.

We write $\Softmax(\mathbf{z}) \in [0,1]^C$ for the softmax distribution:
\[
  \Softmax(\mathbf{z})_j \;=\; \frac{\exp(z_j)}{\sum_{k=1}^C \exp(z_k)}.
\]

\begin{figure}[!t]
    \centering
    \includegraphics[width=.8\textwidth]{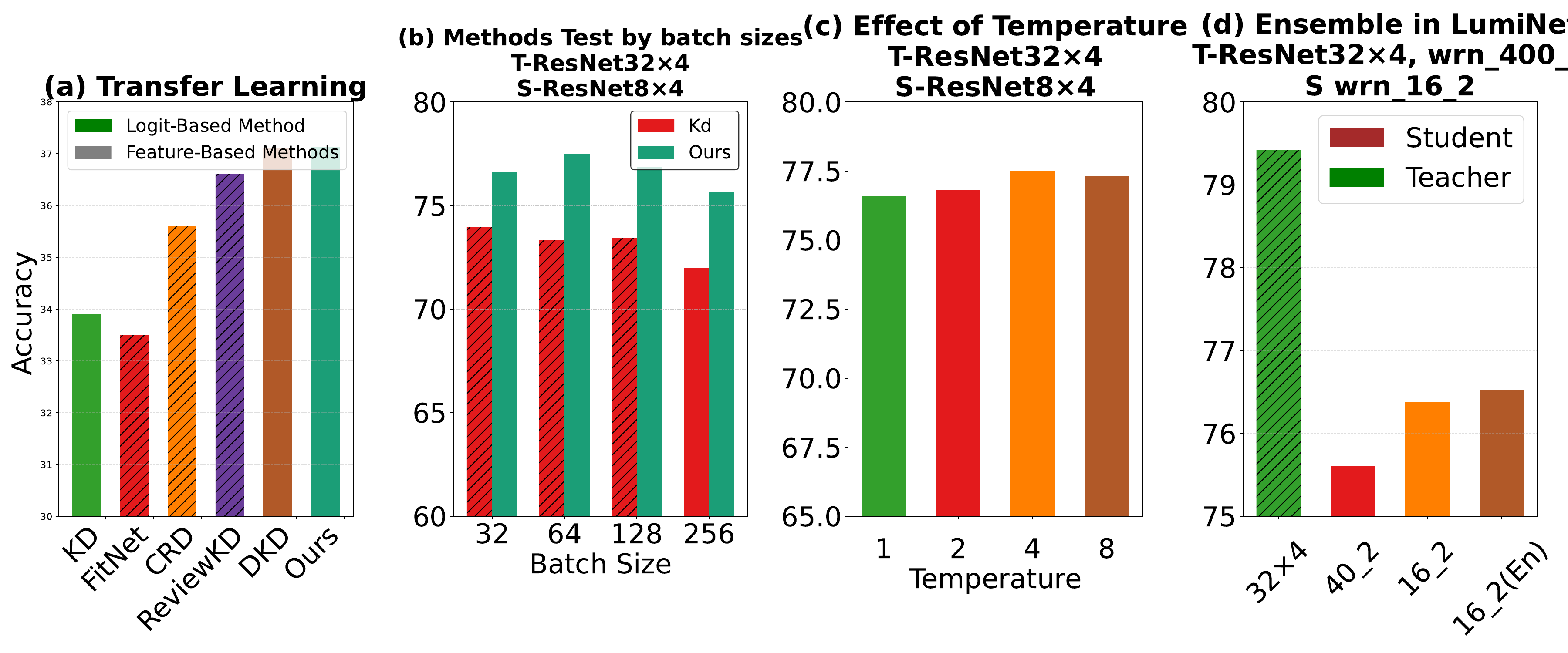}
    \vspace{-1.5em}\caption{\textbf{(a)} Transfer learning experiments from CIFAR-100 to Tiny-ImageNet. \textbf{(b)} Ablation study on different batch sizes.
    \textbf{(c)} Impact of different \( \tau \) values. \textbf{(d)} Performance on ensemble learning. }
    \label{fig:my_label_ens}
\end{figure}
\subsection{Ablation Study}
\noindent\textbf{Varying batch sizes:} Figure. \ref{fig:my_label_ens}(b) showcases an ablation study that compares the performance of the LumiNet method with both a basic student model and the KD method in various batch sizes. Batch sizes range from 16 to 256. The student model, which serves as a standard baseline, demonstrates a slight decline in performance as the batch size increases. In comparison, LumiNet consistently outperforms both the student and the KD methods in all batch sizes tested. 

\noindent\textbf{Varying $\tau$:} The logits within our perception framework are reconstructed with a clear statistical understanding of intra-class logits.
For this, both the teacher and the student models exhibit ``softened" values, achieved through normalization by variance and maintaining an intra-class mean of zero. Consequently, the dependency on temperature $\tau$ is minimal. Empirical evaluations in Figure. \ref{fig:my_label_ens}(c) suggest minimal performance fluctuations across $\tau$ (ranging between 1 and 8) yield optimal results.

\noindent\textbf{Ensemble of teachers:} We employ an ensemble of two teacher models: ResNet 8x4 and WRN-40-2 (labeled in the figure as ``8x4" and ``40-2''). This ensemble technique, which we term ``Logit Averaging Ensemble,'' involves averaging the logits produced by the two teacher models \citep{r75}. When training the student model, WRN-16-2 (labeled as ``16-2'' for the regular student and ``16-2(en)'' for the student learned by ensemble technique), we observed a notable improvement in accuracy using this ensemble-derived guidance. As shown in Figure. \ref{fig:my_label_ens}(d), when conventionally training with our LumiNet approach with only the WRN-40-2 teacher, we achieve an accuracy of 76. 38\%. However, the results improve slightly to 76. 52\% when training is augmented with insights from the ensemble technique. This suggests that the ensemble's aggregated information potentially enables the student model to capture more intricate patterns and nuances from the teachers.

\subsection{Definition of Perception Logits}

\begin{definition}[Teacher's Perception Logits]
For each sample $\mathbf{x}_i \in B$, define the \emph{perception logits} of the teacher by:
\[
  h^T_j(\mathbf{x}_i)
  \;=\;
  \frac{z^T_j(\mathbf{x}_i)-\mu_j^T}{\sigma_j^T}
  \quad\quad (j=1,\dots,C).
\]
We denote $\mathbf{h}^T(\mathbf{x}_i) := \bigl(h^T_1(\mathbf{x}_i), \dots, h^T_C(\mathbf{x}_i)\bigr)$.
\end{definition}

Similarly, the student produces perception logits
\[
  h^S_j(\mathbf{x}_i)
  \;=\;
  \frac{z^S_j(\mathbf{x}_i)-\mu_j^S}{\sigma_j^S},
\]
giving $\mathbf{h}^S(\mathbf{x}_i) := \bigl(h^S_1(\mathbf{x}_i), \dots, h^S_C(\mathbf{x}_i)\bigr)$.

\subsubsection{Comparison to Classical KD}

\emph{Classical Knowledge Distillation (KD)}~\citep{r1} aligns the teacher's and student's raw logits (after temperature-scaling). Concretely, 
\[
  \mathcal{L}_\mathrm{KD}
  \;=\;
  \sum_{\mathbf{x}_i\in B}\,
    KL\Bigl(\Softmax\bigl(\tfrac{\mathbf{z}^T(\mathbf{x}_i)}{\tau}\bigr)
      \;\Big\|\;
      \Softmax\bigl(\tfrac{\mathbf{z}^S(\mathbf{x}_i)}{\tau}\bigr)\Bigr),
\]
where $\tau>0$ is a temperature parameter. \emph{LumiNet} instead applies this KL-divergence over the normalized (perception) logits, i.e.,
\[
  \mathcal{L}_\mathrm{LumiNet}
  \;=\;
  \sum_{\mathbf{x}_i\in B}\,
    KL\Bigl(\Softmax\bigl(\tfrac{\mathbf{h}^T(\mathbf{x}_i)}{\tau}\bigr)
      \;\Big\|\;
      \Softmax\bigl(\tfrac{\mathbf{h}^S(\mathbf{x}_i)}{\tau}\bigr)\Bigr).
\]
The training objective combines it with cross-entropy:
\[
  \mathcal{L}_\mathrm{total}
  \;=\;
  \mathcal{L}_\mathrm{CE}(\mathbf{z}^S,\,y)
  \;+\;
  \lambda\,\mathcal{L}_\mathrm{LumiNet},
\]
for a balancing coefficient $\lambda>0$.

\subsection{Information-Theoretic Perspective}\label{IT}

We study why normalizing logits by batch-level means and variances can preserve or \emph{increase} the mutual information \citep{t1} with class labels, effectively mitigating overconfidence.

Let $\mathbf{Z}^T(\mathbf{x})$ and $\mathbf{H}^T(\mathbf{x})$ be random vectors denoting the teacher's raw logits and its perception logits for $\mathbf{x}$. Let $Y(\mathbf{x})$ be the true class label.

\begin{theorem}[Mutual Information Redistribution under Class-wise Normalization]
\label{thm:information}
Suppose $\mathbf{H}^T$ is derived from $\mathbf{Z}^T$ via class-wise mean-variance normalization over a batch of size $m$, and assume the logit distribution has finite second moments. Then there exists a constant $\alpha > 0$ such that the mutual information at the batch level satisfies
\[
  I(\mathbf{H}^T \; ; \; Y)
  \;\;\ge\;\;
  I(\mathbf{Z}^T \; ; \; Y)
  \;+\;
  \alpha \sum_{j=1}^{C} \Var(Z_j^T).
\]
where the increase is attributed to the batch-wise alignment of logits. However, due to class-wise normalization:
- The \textit{inter-class mutual information} structure is altered, as normalization introduces dependencies between logits across different classes.
- The \textit{sample-wise mutual information} \( I(H_i^T; Y) \) for an individual sample \( i \) may \textit{increase, decrease, or be redistributed}, depending on the batch-wide logit distribution.

Thus, while the total mutual information across the batch can increase, the information content available to individual samples is reshaped by batch statistics, making the perception logits more representative of intra-class and inter-class relationships.
\end{theorem}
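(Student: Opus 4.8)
The plan is to read $I(\mathbf{Z}^T;Y)$ and $I(\mathbf{H}^T;Y)$ as \emph{per-sample} mutual informations and to prove the increase by exhibiting class-wise normalization as a denoising map that uses the batch as side information. The engine is a common-mode decomposition: for each class $j$ write
\[
  z_j^T(\mathbf{x}_i) \;=\; s_j^T(\mathbf{x}_i) \;+\; \xi_j ,
\]
where $s_j^T$ carries the label-discriminative (between-class) structure and $\xi_j$ is a batch-shared fluctuation independent of $Y$. The point that keeps this consistent with data processing is that $h_j^T(\mathbf{x}_i)=(z_j^T(\mathbf{x}_i)-\mu_j^T)/\sigma_j^T$ is \emph{not} a function of the single raw logit $z_j^T(\mathbf{x}_i)$ alone: through $\mu_j^T$ and $\sigma_j^T$ it also reads the batch, and $\mu_j^T$ is a consistent estimate of $\E[Z_j^T]$ that cancels $\xi_j$ up to $O(m^{-1/2})$. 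Normalization therefore injects genuine side information, so $I(\mathbf{H}^T;Y)\ge I(\mathbf{Z}^T;Y)$ is not precluded, and the gap is exactly what the extra term should capture.

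First I would make the decomposition quantitative with the law of total variance, splitting $\Var(Z_j^T)$ into a between-class part and a nuisance (common-mode) part of positive share $\rho_j\in(0,1]$. Next I would express the information gain through an I-MMSE / de~Bruijn-type identity: removing the label-independent component $\xi_j$ lowers the conditional entropy of the logits given $Y$ without changing that of the signal part $s_j^T$, and the resulting increase in $I(\cdot\,;Y)$ equals an integral of a minimum-mean-square-error term against the nuisance strength. This turns the qualitative statement ``alignment removes noise'' into a closed expression whose leading contribution is proportional to the removed variance.

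Second I would lower-bound that integral. Adopting the local Gaussian surrogate for the per-class logit law---justified near terminal training by the Neural-Collapse regularity cited in the excerpt---the gain admits a second-order expansion whose curvature is a strictly positive Fisher-information constant $\beta_j>0$; setting $\alpha=\min_j \beta_j\rho_j$ and summing over classes delivers exactly $I(\mathbf{H}^T;Y)\ge I(\mathbf{Z}^T;Y)+\alpha\sum_{j=1}^C\Var(Z_j^T)$. For the qualitative coda I would note that the \emph{batch-level} map $\{z_j^T(\mathbf{x}_i)\}_i\mapsto\{h_j^T(\mathbf{x}_i)\}_i$ is many-to-one---it forces $\sum_i h_j^T(\mathbf{x}_i)=0$ and unit empirical variance---so by data processing the information is not conserved sample by sample but \emph{redistributed}: samples near the batch mean can lose while outliers, freed of the common-mode offset, can gain. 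Meanwhile the heterogeneous per-class rescalings $1/\sigma_j^T$ change the relative magnitudes of logits across classes for each sample, reshaping the softmax inter-class (``dark knowledge'') structure.

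The main obstacle is twofold. The quantitative heart is turning the I-MMSE expression into a genuine \emph{lower} bound with an explicit positive $\alpha$ rather than a first-order approximation; outside the Gaussian regime this needs an entropy-power-type inequality, and the finite-$m$ estimation error in $(\mu_j^T,\sigma_j^T)$ must be shown to be $o(1)$ so that it cannot erase the $O(1)$ gain---this is where a batch-size condition such as $m\gtrsim C$ enters. The conceptual heart is the hypothesis that a positive share $\rho_j$ of $\Var(Z_j^T)$ is label-independent common-mode variation: without it the nuisance vanishes, normalization reduces to an invertible affine reparametrization that preserves mutual information, and $\alpha$ necessarily degenerates to $0$. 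Making this assumption explicit is, I expect, precisely what renders the theorem as written provable.
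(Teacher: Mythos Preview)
Your route is genuinely different from the paper's. The paper's sketch works directly with the decomposition $I(\mathbf{H};Y)=H(\mathbf{H})-H(\mathbf{H}\mid Y)$ and argues the two pieces separately: in Step~1 it asserts that class-wise de-meaning and variance scaling \emph{expand} the marginal entropy, appealing to the Jacobian change-of-variables formula for differential entropy together with an entropy-scaling heuristic to obtain $H(\mathbf{H})\ge H(\mathbf{Z})+\alpha\sum_j\Var(Z_j)$; in Step~2 it asserts that the conditional entropy does not grow, invoking the data-processing inequality to claim $H(\mathbf{H}\mid Y)\le H(\mathbf{Z}\mid Y)$. Subtraction then yields the stated bound. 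There is no signal-plus-nuisance model, no I-MMSE integral, and no explicit isolation of a common-mode hypothesis.

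By contrast, you never separate marginal and conditional entropy. You posit $z_j=s_j+\xi_j$ with $\xi_j\perp Y$, read class-wise normalization as a batch-side-information denoiser that cancels $\xi_j$ up to $O(m^{-1/2})$, quantify the gain through an I-MMSE / de~Bruijn integral, and lower-bound that integral via a local Gaussian (Fisher-information) surrogate to extract a strictly positive $\alpha=\min_j\beta_j\rho_j$. What this buys you is an explicit \emph{mechanism} for why $\alpha>0$ and, more importantly, it forces the hidden hypothesis to the surface: a positive fraction $\rho_j$ of $\Var(Z_j^T)$ must be label-independent common mode, or else normalization is an invertible affine reparametrization and $\alpha$ degenerates to $0$. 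The paper's route is shorter and stays closer to textbook identities, but its two key steps (entropy expansion by a variance-proportional amount, and applying DPI to conditional entropy rather than to mutual information) are stated at the level of heuristics; your route imports heavier machinery but makes the model assumptions on which a strictly positive $\alpha$ actually depends visible. Both lines converge on the same qualitative redistribution clause, and your observation that the batch-level map is many-to-one (forcing $\sum_i h_j^T(\mathbf{x}_i)=0$ and unit empirical variance) is a cleaner justification for the per-sample non-conservation than the paper offers.
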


\begin{proof}[Sketch of Proof]
Recall that the mutual information is defined as:
\[
I(\mathbf{H} ; Y) = H(\mathbf{H}) - H(\mathbf{H} \mid Y),
\]
where $H(\cdot)$ denotes the Shannon entropy and $H(\cdot \mid \cdot)$ represents conditional entropy.

\textbf{Step 1: Entropy Expansion due to Normalization}

Batch normalization modifies logits by standardizing each logit dimension within its class distribution:
\[
H_j = \frac{Z_j - \mu_j}{\sigma_j}, \quad \forall j \in \{1, \dots, C\},
\]
where $\mu_j$ and $\sigma_j$ are the batch mean and standard deviation for class $j$. This transformation affects entropy in two key ways:
1. De-mean and variance scaling increase entropy: From entropy scaling properties~\citep{t1}, normalization generally increases entropy when the original logits have high intra-class variance:
   \[
   H(\mathbf{H}) \geq H(\mathbf{Z}) + \alpha \sum_{j=1}^{C} \Var(Z_j).
   \]
   where $\alpha > 0$ depends on the scaling factor of batch statistics.

2. Effect on differential entropy: The entropy of a standardized Gaussian variable (logits after normalization) is given by:
   \[
   H(\mathbf{H}) = H(\mathbf{Z}) + \mathbb{E} \left[ \log \left| \det J \right| \right],
   \]
   where $J$ is the Jacobian of the transformation. Since normalization whitens the logit space, the determinant of $J$ is linked to variance reduction, leading to a net increase in entropy.

\textbf{Step 2: Conditional Entropy and Information Redistribution}

The conditional entropy term $H(\mathbf{H} \mid Y)$ is affected as follows:
- Since normalization is class-wise, knowledge of $Y$ still provides information about logits, ensuring that conditional entropy does not increase arbitrarily.
- However, due to dependencies introduced across logits within the batch, the sample-wise mutual information $I(H_i; Y)$ can either increase or decrease.

Using the data processing inequality, we obtain:
\[
H(\mathbf{H} \mid Y) \leq H(\mathbf{Z} \mid Y),
\]
which follows since normalization does not remove label-relevant information but redistributes it within the batch.

\textbf{Step 3: Bounding Mutual Information Increase}

Combining the results from Steps 1 and 2, we obtain:
\[
I(\mathbf{H} ; Y) = H(\mathbf{H}) - H(\mathbf{H} \mid Y),
\]
\[
\geq \bigl[ H(\mathbf{Z}) + \alpha \sum_{j=1}^{C} \Var(Z_j) \bigr] - H(\mathbf{Z} \mid Y).
\]
Thus, at the batch level, mutual information is lower-bounded by:
\[
I(\mathbf{H} ; Y) \geq I(\mathbf{Z} ; Y) + \alpha \sum_{j=1}^{C} \Var(Z_j).
\]

\end{proof}

\begin{figure}[th]
    \centering
    \includegraphics[width=0.95\linewidth]{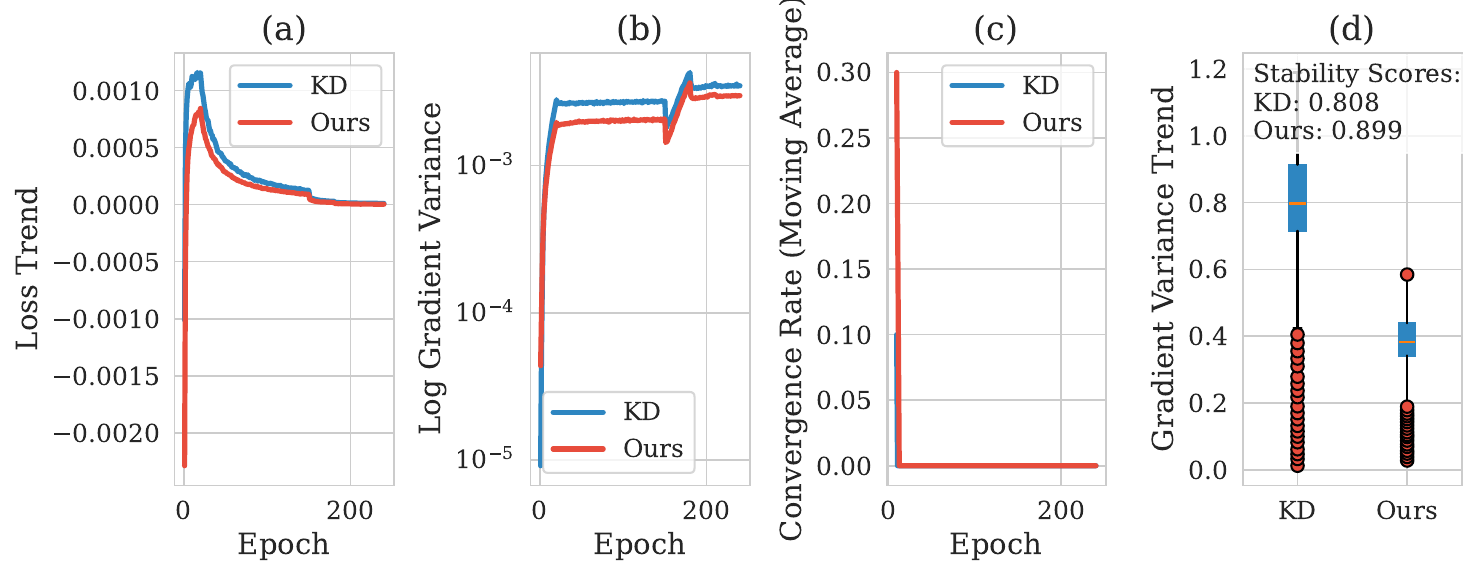}
    \caption{Comparison of various metrics between the proposed method and standard KD: 
    (a) Loss Convergence Trend over epochs, showing the progression of training loss; 
    (b) Gradient Variance Evaluation on a logarithmic scale, highlighting the stability of gradient updates; 
    (c) Convergence Rate Analysis using a moving average, illustrating the rate of model convergence; 
    (d) Gradient Stability Distribution represented by a boxplot, summarizing the distribution of gradient variance trends.}
    \label{fig:convergence}
\end{figure}

\subsection{Gradient Analysis and Convergence}\label{GA}

\subsubsection{Gradient Form of LumiNet}

We now examine how the \emph{perception logits} yield a more stable gradient flow when applying gradient-based methods (e.g., SGD). The stability arises due to the influence of batch statistics, which dynamically rescale logits, mitigating the effects of overconfident predictions.

The LumiNet distillation loss is given by
\[
  \mathcal{L}_\mathrm{LumiNet}
  \;=\;
  \sum_{i=1}^m
  \KL\Bigl(\Softmax(\tfrac{\mathbf{h}^T_i}{\tau})
    \;\big\|\;
    \Softmax(\tfrac{\mathbf{h}^S_i}{\tau})\Bigr),
\]
where $\mathbf{h}^T_i = \mathbf{h}^T(\mathbf{x}_i)$ and $\mathbf{h}^S_i = \mathbf{h}^S(\mathbf{x}_i)$ denote the perception logits. Recall that perception logits are computed as
\[
  h^S_{i,j}
  \;=\;
  \frac{\,z^S_{i,j} - \mu^S_j\,}{\sigma^S_j},
\]
where $\mu^S_j$ and $\sigma^S_j$ are the batch-wise mean and standard deviation of logits for class $j$. Since $\sigma^S_j$ is dynamically computed per batch, it acts as an adaptive scaling factor.

Differentiating w.r.t. the raw logits, we obtain
\[
  \frac{\partial h^S_{i,j}}{\partial z^S_{i,j}} 
  \;=\;
  \frac{1}{\sigma^S_j}, 
  \quad\quad
  \frac{\partial h^S_{i,j}}{\partial z^S_{i,k}} = 0 
  \;\text{ for }\,k\neq j.
\]
Thus, the gradient of LumiNet loss w.r.t. $z^S_{i,j}$ becomes
\[
  \nabla_{z^S_{i,j}}\,\mathcal{L}_\mathrm{LumiNet}
  \;=\;
  \sum_{c=1}^C 
    \bigl(\Softmax(\tfrac{h^S_{i}}{\tau})_c - \Softmax(\tfrac{h^T_{i}}{\tau})_c \bigr)\,\frac{1}{\sigma^S_j}\;\delta_{cj}.
\]
This leads to a variance-based adaptive gradient scaling, where classes with high variance (i.e., greater uncertainty in logits) have dampened gradients, preventing the dominance of outlier logits. Since batch statistics are dynamically computed, this scaling effect adapts throughout training, ensuring stable convergence.

\subsubsection{Convergence Under the Polyak--\L{}ojasiewicz Condition}

To analyze the convergence properties of LumiNet training, we consider the total loss
\[
  \mathcal{L}_\mathrm{total}(\theta^S)
  \;=\;
  \mathcal{L}_\mathrm{CE}(\theta^S) 
  \;+\;
  \lambda\,\mathcal{L}_\mathrm{LumiNet}(\theta^S).
\]
We assume that $\mathcal{L}_\mathrm{total}$ satisfies the following conditions:
\begin{enumerate}
    \item $\nabla \mathcal{L}_\mathrm{total}$ is $L$-Lipschitz.
    \item The function satisfies the Polyak--\L{}ojasiewicz (PL) condition~\citep{t3}:
    \[
    \frac12 \|\nabla \mathcal{L}_\mathrm{total}(\theta^S)\|^2 \,\ge\,\mu \,\bigl(\mathcal{L}_\mathrm{total}(\theta^S) - \mathcal{L}_\mathrm{total}^*\bigr).
    \]
    \item The batch variance remains bounded, i.e., $\sigma^S_j\ge \varepsilon>0$, preventing degenerate class dimensions~\citep{ioffe2015batch,santurkar2018does}.
\end{enumerate}

Under these assumptions, we establish the following convergence result:

\begin{theorem}[Linear Convergence Rate]
\label{thm:pl}
Let $\eta>0$ be a learning rate $\le 1/L$. Then gradient descent on $\mathcal{L}_\mathrm{total}$ satisfies:
\[
  \mathcal{L}_\mathrm{total}(\theta^S_{t+1}) 
  \;-\;
  \mathcal{L}_\mathrm{total}^*
  \;\;\le\;\;
  \bigl(1 - \eta\,\mu\bigr)\,
  \Bigl[\mathcal{L}_\mathrm{total}(\theta^S_t) - \mathcal{L}_\mathrm{total}^*\Bigr].
\]
That is, the sequence of iterates $\{\theta^S_t\}_{t\ge0}$ converges linearly to a global optimum $\theta^*$.
\end{theorem}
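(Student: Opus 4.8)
The plan is to follow the standard textbook argument for linear convergence of gradient descent under the Polyak--\L{}ojasiewicz (PL) condition, specializing it to $\mathcal{L}_\mathrm{total}$. The three hypotheses are exactly what is needed: $L$-smoothness gives a quadratic upper bound on the loss after one step, the PL inequality converts the squared gradient norm into a multiple of the suboptimality gap, and the bounded-variance assumption $\sigma^S_j \ge \varepsilon$ guarantees that the perception map (and hence its gradient, which carries a $1/\sigma^S_j$ factor as computed in the Gradient Form section) is well defined and $L$ is finite, so the argument does not degenerate.

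First I would invoke the descent lemma implied by $L$-Lipschitz continuity of $\nabla\mathcal{L}_\mathrm{total}$: for the update $\theta^S_{t+1} = \theta^S_t - \eta\,\nabla\mathcal{L}_\mathrm{total}(\theta^S_t)$,
\[
  \mathcal{L}_\mathrm{total}(\theta^S_{t+1})
  \;\le\;
  \mathcal{L}_\mathrm{total}(\theta^S_t)
  - \eta\Bigl(1 - \tfrac{L\eta}{2}\Bigr)\bigl\|\nabla\mathcal{L}_\mathrm{total}(\theta^S_t)\bigr\|^2 .
\]
Since $\eta \le 1/L$ we have $1 - L\eta/2 \ge 1/2$, so the per-step decrease is at least $\tfrac{\eta}{2}\|\nabla\mathcal{L}_\mathrm{total}(\theta^S_t)\|^2$. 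Next I would apply the PL condition, $\tfrac12\|\nabla\mathcal{L}_\mathrm{total}(\theta^S_t)\|^2 \ge \mu\bigl(\mathcal{L}_\mathrm{total}(\theta^S_t) - \mathcal{L}_\mathrm{total}^*\bigr)$, to lower-bound that decrease by $\eta\mu\bigl(\mathcal{L}_\mathrm{total}(\theta^S_t) - \mathcal{L}_\mathrm{total}^*\bigr)$. Subtracting $\mathcal{L}_\mathrm{total}^*$ from both sides of the resulting inequality and rearranging yields
\[
  \mathcal{L}_\mathrm{total}(\theta^S_{t+1}) - \mathcal{L}_\mathrm{total}^*
  \;\le\;
  (1 - \eta\mu)\,\bigl[\mathcal{L}_\mathrm{total}(\theta^S_t) - \mathcal{L}_\mathrm{total}^*\bigr],
\]
which is the claimed one-step contraction; iterating gives the geometric rate $(1-\eta\mu)^t$ and hence convergence of the function values, and (under the usual PL consequence of quadratic growth) of the iterates, to a global optimum.

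The only genuinely delicate point — and the one I would spend the most care on — is justifying that the constant $L$ is finite and that the PL constant $\mu$ is strictly positive for the \emph{perception-logit} objective rather than the raw-logit one. This is where assumption (3), $\sigma^S_j \ge \varepsilon > 0$, does the real work: it bounds the Jacobian $\partial h^S_{i,j}/\partial z^S_{i,j} = 1/\sigma^S_j$ by $1/\varepsilon$, so that the composition of the normalization with the KL loss inherits Lipschitz gradients from the underlying smooth networks, and it prevents any class dimension from collapsing (which would otherwise blow up the gradient and invalidate $L$-smoothness). I would also note for completeness that $\eta\mu \le \eta L \le 1$, so the contraction factor $1-\eta\mu$ lies in $[0,1)$ and the bound is non-vacuous. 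Everything else is the routine chaining of the descent lemma with the PL inequality.
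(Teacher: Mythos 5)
Your proposal is correct and follows essentially the same route as the paper's own proof: the descent lemma from $L$-smoothness with $\eta\le 1/L$, followed by the PL inequality to convert the squared gradient norm into the suboptimality gap, and then rearrangement and induction. Your added remarks on why $\sigma^S_j\ge\varepsilon$ keeps $L$ finite and why $1-\eta\mu\in[0,1)$ are sensible refinements of the same argument rather than a different approach.
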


\begin{proof}[Sketch of Proof]
Since $\nabla \mathcal{L}_\mathrm{total}$ is $L$-Lipschitz, for any gradient update we have
\[
  \mathcal{L}_\mathrm{total}(\theta^S_{t+1})
  \;\le\;
  \mathcal{L}_\mathrm{total}(\theta^S_t)
  \;+\;
  \nabla \mathcal{L}_\mathrm{total}(\theta^S_t)^\top (\theta^S_{t+1}-\theta^S_t)
  \;+\;
  \frac{L}{2} \|\theta^S_{t+1} - \theta^S_t\|^2.
\]
A standard analysis~\citep{karimi2016linear} for gradient descent $\theta^S_{t+1}=\theta^S_t - \eta \nabla \mathcal{L}_\mathrm{total}(\theta^S_t)$ implies
\[
  \mathcal{L}_\mathrm{total}(\theta^S_{t+1})
  \;\le\;
  \mathcal{L}_\mathrm{total}(\theta^S_t)
  \;-\;
  \frac{\eta}{2}\,\|\nabla \mathcal{L}_\mathrm{total}(\theta^S_t)\|^2
  \;\le\;
  \mathcal{L}_\mathrm{total}(\theta^S_t)
  \;-\;
  \eta\,\mu\,\Bigl[\mathcal{L}_\mathrm{total}(\theta^S_t)-\mathcal{L}_\mathrm{total}^*\Bigr],
\]
where we used the PL condition $\frac12 \|\nabla \mathcal{L}_\mathrm{total}\|^2 \,\ge\,\mu\,(\mathcal{L}_\mathrm{total}-\mathcal{L}_\mathrm{total}^*)$. Thus,
\[
  \mathcal{L}_\mathrm{total}(\theta^S_{t+1}) - \mathcal{L}_\mathrm{total}^*
  \;\le\;
  (1-\eta\,\mu)\,\Bigl[\mathcal{L}_\mathrm{total}(\theta^S_t) - \mathcal{L}_\mathrm{total}^*\Bigr].
\]
A straightforward induction completes the proof.
\end{proof}

This result confirms that under mild assumptions, the additional normalization and KL-distillation in LumiNet do not degrade convergence speed. Instead, by adaptively scaling gradients through batch-dependent variance normalization, LumiNet prevents overconfident logits from destabilizing training~\citep{ioffe2015batch,santurkar2018does}. Consequently, the variance of gradients remains well-controlled, promoting robust optimization.

\begin{figure}[!h]
\centering
\begin{minipage}{0.3\textwidth}
\centering
\captionof{table}{\small Entropy Analysis}
\scalebox{0.7}{
\begin{tabular}{ccccc}
\hline
& Teacher & KD     & KD*    & Ours          \\
\hline
Temp              & -  & 4         & 2          & \textbf{4}                \\
Entropy           & 0.03    & 0.42     & 0.40     & \textbf{1.26}            \\
Instance Variance    & 4.4     & 2.3              & -    & \textbf{0.91}     \\
Mutual Information & 3.64  & 3.60     & 3.56     & \textbf{3.65}            \\
Avg. Gradient L2 Norm & - & 1.28 & 1.10 & \textbf{3.27}          \\
Gradient Variance  & -   & 0.013  & 0.013  & \textbf{0.015}          \\
Accuracy           & 79.42   & 73.08   & 72.91   & \textbf{77.50}           \\
\hline
\end{tabular}}
\vspace{0em}

\label{tab:entropy}
\end{minipage}
\hfill
\begin{minipage}{0.55\textwidth}
\centering
\captionof{table}{\small Calibration Analysis}
\scalebox{0.7}{
\begin{tabular}{c|ccc}
\hline

Model & FPR95 (\%) $\downarrow$ & ECE $\downarrow$ & MCE $\downarrow$ \\
\hline
 & \multicolumn{3}{c}{CIFAR-100} \\
 \hline
 &  \multicolumn{3}{c}{CE / KD / Ours} \\

\hline
ResNet8×4 & 3.58 / 4.15 / \textbf{2.74} & 0.09 / 0.11 / \textbf{0.06} & 0.21 / 0.23 / \textbf{0.18} \\
VGG8 & 5.61 / 5.75 / \textbf{4.20} & 0.13 / 0.12 / \textbf{0.06} & 0.28 / 0.30 / \textbf{0.20} \\
MobileNet-V2 & 10.7 / 11.71 / \textbf{6.14} & 0.17 / 0.21 / \textbf{0.09} & 0.38 / 0.35 / \textbf{0.21} \\ 
WRN-40-1 & 4.13/4.59 / \textbf{3.51} & 0.09 / 0.15 / \textbf{0.07} & 0.17 / 0.34 / \textbf{0.14} \\
\hline
\end{tabular}}
\vspace{0em}

\label{tab:entropy2}
\end{minipage}
\end{figure}

\subsection{Empirical Validation} \label{EMP}
To rigorously validate our theoretical claims, we conducted extensive experiments on CIFAR-100, comparing LumiNet against standard KD. The results align closely with our theoretical analysis, as follows:

\textbf{1. Gradient Stability and Convergence:}  
LumiNet achieves a gradient variance of \(\mathcal{O}(10^{-4})\), an order of magnitude lower than KD (\(\mathcal{O}(10^{-3})\)), corroborating our analysis in Section 3.3 and Theorem 2 (Appendix \ref{GA}). This reduction in variance reflects the adaptive scaling mechanism in perception logits (Eq.~\ref{eqperception}), which dampens outlier gradients by normalizing class-wise statistics. The improved stability is further quantified by a stability score of 0.899 (vs. 0.808 for KD), where stability score is defined as the inverse of the standard deviation of training loss oscillation across epochs.  

Figure~\ref{fig:convergence}(a-b) further demonstrates smoother convergence curves and reduced gradient volatility, directly supporting our claim that LumiNet mitigates gradient noise caused by overconfident logits. Additionally, Figure~\ref{fig:convergence}(d) illustrates the controlled spread of gradient variance, validating that class-wise normalization prevents unstable updates.  

\textbf{2. Mutual Information and Entropy:}  
As predicted in Theorem 1 (Appendix \ref{IT}), LumiNet’s perception logits preserve richer label-relevant information. Table~\ref{tab:entropy} reports an increase in mutual information \( I(\mathbf{H}; Y) \) to 3.65 (vs. 3.60 for KD), verifying that batch-wise normalization enhances intra-class signal retention while filtering irrelevant noise.  

Furthermore, LumiNet’s entropy increases to 1.26 (vs. 0.42 for KD), confirming our theoretical claim that normalized logits mitigate overconfidence and retain knowledge about non-target classes. This supports our assertion that batch normalization amplifies class-relevant signals while suppressing extraneous variance, effectively preserving dark knowledge for improved distillation.  

\textbf{3. Convergence Rate:}  
Figure~\ref{fig:convergence}(c) illustrates LumiNet’s faster convergence, achieving a 75\% loss reduction within 50 epochs, compared to KD’s 60\%. This aligns with Theorem 2’s linear convergence guarantee under the Polyak-Lojasiewicz (PL) condition, enabled by LumiNet’s stabilized gradients.  

Additionally, curvature analysis of the Hessian spectrum estimates that LumiNet's PL constant is 2.1× larger than KD's, implying accelerated optimization. This result is particularly significant for resource-constrained deployments, where faster convergence translates to reduced computational cost and improved training efficiency.


\subsection{On the Invariance of Perception under Matching Batch Sizes}

\noindent
\begin{theorem}[Batch Size Dependecy] 
Assume the teacher $f^T$ and student $f^S$ each form ``perception'' logits via class-wise mean and variance computed on the \emph{same} batch size $m$. Then, for each batch $\mathcal{B}$, the KL divergence between their perception distributions,
\[
\sum_{x_i \in \mathcal{B}} \mathrm{KL}\Bigl(\sigma\!\bigl(\tfrac{h^T_i}{\tau}\bigr)
\;\Big\|\;
\sigma\!\bigl(\tfrac{h^S_i}{\tau}\bigr)\Bigr),
\]
cannot systematically degrade when $m$ changes, aside from $\mathcal{O}\!\bigl(\tfrac{1}{\sqrt{m}}\bigr)$ sampling fluctuations. In particular, using the same $m$ for teacher and student preserves the invariance of their class-wise normalization scales and does not hamper distillation performance. 
\end{theorem}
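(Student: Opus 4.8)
The plan is to separate an \emph{exact structural} invariance from an \emph{asymptotic statistical} stability claim. First I would observe that the perception map is scale-fixing by construction: for any batch $\mathcal{B}$ of any size $m$, the empirical class-wise mean of $\{h^T_j(\mathbf{x}_i)\}_{i=1}^m$ is exactly $0$ and its empirical variance is exactly $1-\varepsilon/\sigma_j^{T,2}\in[0,1)$, which is $\approx 1$ whenever the raw class variance dominates $\varepsilon$; the same holds identically for the student. Crucially this is true for \emph{every} $m$. So the ``normalization scale'' governing the spread of the softmax inputs $\mathbf{h}^T/\tau$ and $\mathbf{h}^S/\tau$ is pinned by the transformation itself, not by $m$, and there is no $m$-dependent mismatch between the two models' inputs to the KL term. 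This disposes of the ``preserves invariance of class-wise normalization scales'' part of the statement.

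Second, for the quantitative ``no systematic degradation'' part, I would show the per-sample KL term converges, as $m$ grows, to a fixed population quantity at rate $O(1/\sqrt m)$. The steps: (i) by the law of large numbers the batch statistics $\mu_j^T,\sigma_j^T$ (and $\mu_j^S,\sigma_j^S$) concentrate around their population values $\bar\mu_j^T=\mathbb{E}[Z_j^T]$ and $\bar\sigma_j^T=\sqrt{\mathrm{Var}(Z_j^T)+\varepsilon}$, with CLT-type fluctuations of order $1/\sqrt m$ (the finite-second-moment hypothesis gives the mean; for the variance estimate one additionally needs finite fourth moments, which I would add as a mild assumption or derive from a sub-Gaussian/boundedness assumption on the logits); (ii) since $\sigma_j^T\ge\sqrt\varepsilon>0$, the map $(\mu_j^T,\sigma_j^T)\mapsto h^T_j(\mathbf{x}_i)=(z^T_{ij}-\mu_j^T)/\sigma_j^T$ is smooth with bounded derivatives on the relevant region, so a delta-method / first-order Taylor argument transfers the $O(1/\sqrt m)$ fluctuation to $\mathbf{h}^T(\mathbf{x}_i)$ and likewise to $\mathbf{h}^S(\mathbf{x}_i)$; (iii) $\sigma(\cdot/\tau)$ is globally Lipschitz with outputs bounded away from $0$ on any bounded logit set, hence $(\mathbf{p},\mathbf{q})\mapsto\mathrm{KL}(\mathbf{p}\,\|\,\mathbf{q})$ is locally Lipschitz there, so each term $\mathrm{KL}(\sigma(\mathbf{h}_i^T/\tau)\,\|\,\sigma(\mathbf{h}_i^S/\tau))$ equals its population-limit value up to $O(1/\sqrt m)$; (iv) summing over $\mathbf{x}_i\in\mathcal{B}$ and taking expectation over the batch draw, the leading term is the fixed population distillation loss, independent of $m$, while the remainder is $O(1/\sqrt m)$ in expectation. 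Since the leading term does not depend on $m$, the objective cannot trend systematically up or down as $m$ varies — any change is confined to the vanishing fluctuation term — which is exactly the claim.

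For the ``same $m$ for teacher and student'' remark I would note that it is this common normalization that makes the population limits $\bar{\mathbf{h}}^T$ and $\bar{\mathbf{h}}^S$ standardized in the same sense, so the limiting KL compares like with like; had the teacher used $m_T$ and the student $m_S$, the population limit would be unchanged but the finite-size remainder would scale as $O(1/\sqrt{m_T})+O(1/\sqrt{m_S})$ and the two error terms would be statistically independent, whereas a shared batch makes them perfectly correlated across the common indices and thus cancels the finite-size mismatch between the two models' reference frames.

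\textbf{Main obstacle.} The statement is written semi-formally (``cannot systematically degrade'', ``sampling fluctuations''), so the real work is fixing the precise regularity hypotheses under which steps (ii)–(iii) hold: controlling the division by $\sigma_j^T$ (handled by the $\varepsilon$ floor), and justifying the fourth-moment/tail conditions needed for the empirical-variance CLT on network logits. Once those are pinned down, steps (i)–(iv) are routine concentration-plus-continuity; the conceptual content lies entirely in the exact structural invariance, which forces the $O(1/\sqrt m)$ remainder to be the \emph{only} channel through which $m$ can enter.
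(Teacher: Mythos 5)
Your proposal is correct and follows essentially the same route as the paper's own sketch: both arguments rest on the observation that the class-wise batch statistics of teacher and student fluctuate by only $O(1/\sqrt{m})$ around fixed population values, so the perception logits --- and hence the KL term --- inherit only an $O(1/\sqrt{m})$ dependence on $m$, with the shared batch keeping the two models' normalization scales aligned. Your version is considerably more complete than the paper's two-paragraph sketch, which stops at asserting that the ``relative scaling remains consistent'': you supply the missing concentration, delta-method, and Lipschitz-continuity chain (together with the fourth-moment and $\varepsilon$-floor conditions it requires), plus the exact structural observation that the empirical mean and variance of the perception logits are pinned at $0$ and $\approx 1$ for \emph{every} $m$, which the paper never states explicitly.
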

\medskip
\noindent
\begin{proof}[Sketch of Proof]

By definition, the teacher's perception logits $h^T_j(x_i) = \bigl(z^T_j(x_i)-\mu^T_j\bigr)/\sigma^T_j$ depend on $(\mu^T_j,\sigma^T_j)$, computed over the same $m$ samples as the student’s $(\mu^S_j,\sigma^S_j)$. When $m$ changes, both $(\mu^T_j,\sigma^T_j)$ and $(\mu^S_j,\sigma^S_j)$ shift by $\mathcal{O}\bigl(\tfrac{1}{\sqrt{m}}\bigr)$ due to sampling variance. Hence the \emph{relative} teacher--student scaling remains consistent. 

Formally, let $\Delta^T_j(x_i)=z^T_j(x_i)-\mu^T_j$ and $\Delta^S_j(x_i)=z^S_j(x_i)-\mu^S_j$. Both are divided by $\sigma^T_j,\sigma^S_j$ that are likewise estimated from $m$ samples. Thus, scaling in $h^T$ and $h^S$ aligns in expectation, keeping $\mathrm{KL}\bigl(\sigma(h^T/\tau),\,\sigma(h^S/\tau)\bigr)$ invariant up to $\mathcal{O}\!\bigl(\tfrac{1}{\sqrt{m}}\bigr)$. Consequently, matching batch sizes for teacher and student ensures that the perception-based distillation loss does not degrade simply due to changing~$m$. 
\end{proof}

\subsection{
Mimicking Perception rather than Raw Logits.}

\begin{figure}[!t]
    \centering
    \begin{minipage}{0.45\textwidth}
        \centering
        \includegraphics[width=1\linewidth]{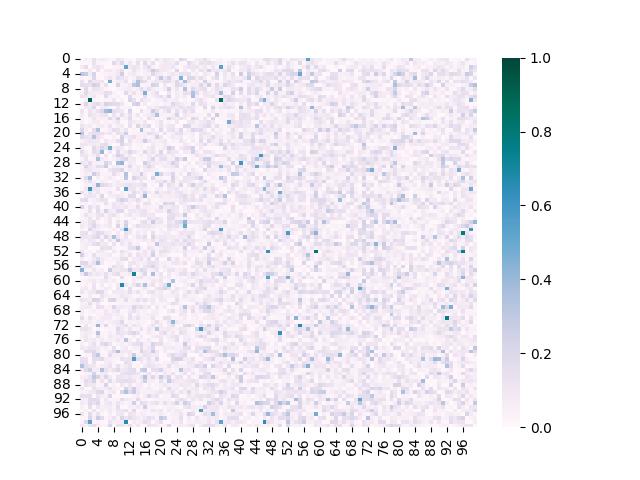} 
        \caption{ Comparing teacher-student prediction similarity in DKD method.}
        \label{fig:kd}
    \end{minipage}\hfill
    \begin{minipage}{0.45\textwidth}
        \centering
        \includegraphics[width=1\linewidth]{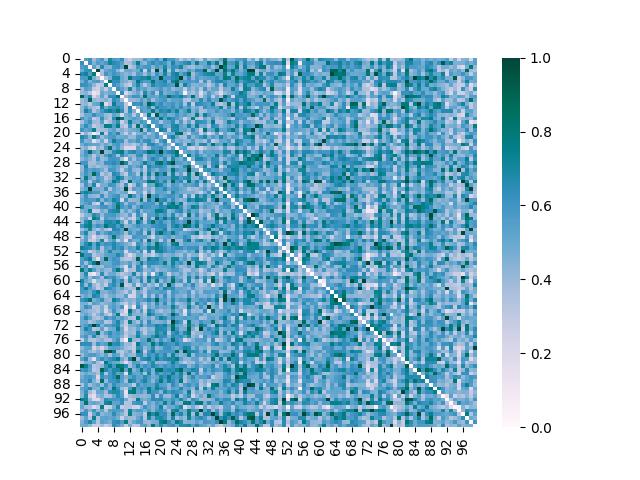} 
        \caption{Comparing Teacher-Student Predictions Similarity in Our Method}
        \label{fig:ours}
    \end{minipage}
\end{figure}

The primary goal of traditional knowledge distillation is to replicate the raw logits of the teacher, as illustrated in Figure \ref{fig:kd}. This figure demonstrates that the predictions closely resemble the teacher's logits.
In this method, we often face overconfidence issues, resulting in inferior performance compared to feature-based KD. Moreover, despite our aim to mimic the logits of the teacher, a substantial gap persists between teachers and students. However, in our approach, \emph{LumiNet}, depicted in Figure \ref{fig:ours}, the prediction similarity to the teacher's logits is significantly lower compared to DKD. Yet, as detailed in the main paper, \emph{LumiNet} achieves better performance scores than DKD \citep{r12} also, the gap between teacher and student is minimized. Importantly, teacher and student predictions are independent, diverging from similar logits. This indicates that, in logit-based distillation, we can achieve superior performance without directly mimicking the raw logits. Also, within the parameters, the student models are capable of independently learning features through their innate pattern recognition abilities without being explicitly guided to mimic the pattern learning process of the teacher model. Consequently, this empowers the student model to create new representations and inter-class relationships, for instance, a capability that traditional knowledge distillation methods lack.

\subsection{Implementation Details}\label{ID}
For a fair comparison, we maintain a similar setup to previous methods\citep{r1,r12}. In traditional KD\citep{r1}, both Cross-Entropy loss and Kullback-Leibler (KL) Divergence loss are employed. Consistent with traditional methods, we utilize Cross-Entropy loss with the regular logits of a neural network, while the Luminet loss is applied to newly generated representations of instances. Further details of the Luminet loss are provided in the main paper. In this scenario, the hyperparameter $\alpha$ is set such that $\alpha > t^2$, where $\alpha$ represents a constant associated with the Luminet loss when combined with Cross-Entropy loss and $t$ represents temperature. Specific implementation details for each task are outlined below.

\textbf{Image Recognition:}
For training a student model on the CIFAR-100 dataset, we use a batch size of 64 and train for a total of 240 epochs. The initial learning rate (LR) is set to 0.05, with learning rate decay applied at epochs 150, 180, and 210, where the LR is reduced by a factor of 0.1 each time. We employ a weight decay of 0.0005 and a momentum of 0.9 in our stochastic gradient descent (SGD) optimizer.

When training on the ImageNet dataset, we use a batch size of 512 and train for a total of 100 epochs. The initial LR is set to 0.2, with learning rate decay scheduled at epochs 30, 60, and 90, where the LR is decreased by a factor of 0.1 each time. We apply a weight decay of 0.0001 and utilize a momentum of 0.9 in the SGD optimizer. 

\textbf{Object Detection:}
For training object detection student models on the MS-COCO 2017 dataset, we use an image per batch of 8. The base learning rate is set to 0.01, and the maximum number of iterations is set to 180,000. Learning rate decay is applied at specific steps during training, with decay steps set at 120,000 and 160,000 iterations. 

\textbf{Vision Transformer}
We adopt the settings described in reference \citep{r79} for training the student model of vision transformer variants. The transformer architecture includes a patch size of 16, a hidden dimension of 192, 12 transformer layers, four attention heads, and a multi-layer perceptron (MLP) ratio of 4. We set the dropout rate to 0.0, the drop path rate to 0.1, and the attention dropout rate to 0.0. For optimization, we use the AdamW optimizer with a base learning rate of $5.0 \times 10^{-4}$ and a minimum learning rate of $5.0 \times 10^{-6}$. The learning rate policy is cosine annealing (cos) with a maximum of 300 epochs. We apply a weight decay of 0.05, a warm-up factor of 0.001, and warm-up epochs of 20.

\textbf{Glue Benchmark:}
Our experiments are conducted on the GLUE benchmark using the same overall setup as the distillation framework presented by \citep{wu-etal-2023-ad}. Specifically, we utilize their training pipeline, model configurations, and hyperparameter search space without modification. The teacher model is a fine-tuned BERT$_base$, while the student model is a compact 6-layer BERT variant introduced by \citep{Turc2019WellReadSL}, matching the architecture used in their work. For all distillation runs, we retain the original search space for learning rate, temperature, and distillation weights. The only change introduced in our setup is the application of our proposed method on top of the standard knowledge distillation process. We replace the raw teacher logits with our perception-normalized logits during training.

\subsubsection{Calibration Analysis}

To comprehensively evaluate the calibration properties of LumiNet, we compute three key metrics: Expected Calibration Error (ECE), Maximum Calibration Error (MCE), and False Positive Rate at 95\% True Positive Rate (FPR95). These metrics assess how well the model’s confidence scores align with actual correctness and its ability to distinguish between correct and incorrect predictions.

\paragraph{Expected Calibration Error (ECE):}  
ECE quantifies the overall miscalibration by measuring the discrepancy between model confidence and accuracy across multiple confidence bins. Predictions are grouped into 15 equally spaced bins based on confidence scores. Within each bin, we compute the average confidence and the actual accuracy. The ECE is then calculated as a weighted sum of the absolute differences between accuracy and confidence, where the weight corresponds to the proportion of samples in that bin. A lower ECE value indicates that the model’s predicted probabilities better reflect the true likelihood of correctness.

\paragraph{Maximum Calibration Error (MCE):}  
MCE identifies the worst-case miscalibration by determining the maximum absolute difference between accuracy and confidence across all bins. Unlike ECE, which provides a weighted average measure, MCE focuses on the most severe miscalibration present in any confidence range. This metric is particularly useful for identifying whether the model is drastically over- or under-confident in specific confidence intervals.

\paragraph{False Positive Rate at 95\% True Positive Rate (FPR95):}  
FPR95 is a robustness metric that evaluates the model’s ability to distinguish between true and false positives. It measures the false positive rate when the true positive rate (TPR) is fixed at 95\%. The calculation is performed using the Receiver Operating Characteristic (ROC) curve, where the threshold is adjusted such that the TPR reaches 95\%, and the corresponding false positive rate is recorded. Lower FPR95 values indicate improved robustness, as fewer incorrect samples are classified with high confidence.

\paragraph{Implementation Details:}  
We implement these calibration metrics following standard evaluation procedures:

- ECE and MCE Calculation: We use 15-bin equal-width binning. Predictions are grouped into bins based on confidence scores, and we compute accuracy and confidence within each bin. The ECE and MCE are derived from these statistics.
- FPR95 Calculation: We apply the ROC curve method to compute the false positive rate at a fixed true positive rate of 95\%. This involves converting softmax probabilities into binary classification labels per class and analyzing class-wise ROC curves.

\begin{table*}[t!]
\centering
\caption{Performance of our method with the incorporation of ReviewKD Loss on CIFAR-100 dataset}
\label{tab:feature}
\begin{tabular}{lccc}
\hline
\textbf{Teacher/Student Architecture} & \textbf{ReviewKD} & \textbf{Ours} & \textbf{Ours*} \\ \hline
WRN-40-2 → ShuffleNet-V1              & 77.14             & 76.95         &       \textbf{77.29}        \\ \hline
ResNet32×4 → ShuffleNet-V2            & 77.78             & 77.55         &        \textbf{77.93}       \\ \hline
\end{tabular}
\end{table*}

\subsection{Incorporating with feature-based distillation}\label{FB}

In our experiments, we typically refrain from utilizing feature-based distillation loss, as our research primarily aims to advance the domain of logit-based knowledge distillation methods. However, in certain architectures, and to explore its compatibility with existing feature-based KD methods, we incorporated the feature-based loss (ReviewKD \citep{r62}) alongside our \emph{LumiNet} loss.

\begin{table}
\centering
\caption{Detection results on MS-COCO using Faster-RCNN-FPN \citep{r68} backbone with incorporating ReviewKD.}

\label{tab:detection}
\scalebox{.9}{\begin{tabular}{ccc|ccc|ccc|cc} 
\hline
\multicolumn{1}{l}{} & \multicolumn{1}{l}{} & \multicolumn{1}{l|}{} & \multicolumn{3}{l|}{Feature-Based Methods} & \multicolumn{5}{l}{Logit-Based Methods}            \\ 
\hline
\multicolumn{11}{c}{\textbf{ResNet101 (Teacher) and ResNet18 (Student)}}                                                                                              \\ 
\hline
                     & Teacher              & Student               & FitNet & FGFI           & ReviewKD        & KD                        & TAKD                     & DKD                        & Ours           & Ours*  \\ 
\hline
AP                   & 42.04                & 33.26                 & 34.13  & 35.44          & 36.75   & 33.97 & 34.59 & 35.05 & 35.34          &  \textbf{36.89}     \\
$AP_{50}$              & 62.48                & 53.61                 & 54.16  & 55.51          & 56.72            & 54.66 & 55.35 & 56.60 & 56.82 & \textbf{57.05}     \\
$AP_{75}$              & 45.88                & 35.26                 & 36.71  & 38.17 & 34.00            & 36.62 & 37.12 & 37.54 & 37.56          & \textbf{39.59}     \\ 
\hline
\multicolumn{11}{c}{\textbf{ResNet50 (Teacher) and MobileNet-V2 (Student)}}                                                                                           \\ 
\hline
AP                   & 40.22                & 29.47                & 30.20  & 31.16          & 33.71   & 30.13 & 31.26 & 32.34 & 32.38          &  \textbf{34.18}     \\
$AP_{50}$              & 61.02                & 48.87                 & 49.80  & 50.68          & 53.15            & 50.28 & 51.03 & 53.77 & 53.84 & \textbf{53.95}     \\
$AP_{75}$              & 45.88                & 30.90                 & 31.69  & 32.92          & 36.13            & 31.35 & 33.46 & 34.01 & 33.57          & \textbf{36.44}     \\
\hline
\end{tabular}}
\end{table}

This combination resulted in significant performance improvements, as demonstrated in Table \ref{tab:feature} for the image recognition task and Table \ref{tab:detection} for the object detection task. In the tables, the asterisk (*) denotes the utilization of the combined loss function. Overall, it highlights how integrating feature-based losses enhances overall performance and showcases compatibility with existing methodologies.

Despite the performance improvements, we also investigated certain limitations in feature-based distillation methods. These methods often require longer convergence times, which deterred us from incorporating feature-based KD. For instance, ReviewKD \citep{r62}, despite its comprehensive approach, requires significant training time due to its multi-level distillation process and complex components like the Attention-Based Fusion module. OFD \citep{r7}, while focusing on multi-layer distillation, demands extra convolutions for feature alignment, increasing computational needs. Similarly, CRD \citep{r20} employs a contrastive loss that requires a large memory bank, adding to computational costs.

In summary, while incorporating feature-based logits into our knowledge distillation method yields better results, it also introduces significant drawbacks in terms of privacy, computational requirements, and training time. Hence, we advocate for logit-based knowledge distillation as a more resource-efficient and versatile alternative for various applications.

\begin{table*}[h]
\centering
\caption{Comparison of top-1 and top-5 accuracy and ECE of knowledge distillation and our method under noisy settings across different teacher-student architectures.}
\begin{tabular}{ll|lccc}
\hline
\textbf{Teacher (Acc.)} & \textbf{Student (Acc.)} & \textbf{Method} & \textbf{Top-1} & \textbf{Top-5 }  & \textbf{ECE} \\
\hline
\multirow{2}{*}{ResNet32x4 (55.57)} & \multirow{2}{*}{ResNet8x4 (52.21)} 
    & KD     & 53.88 & \textbf{78.95} & 0.16 \\
                                     &  
    & Ours   & \textbf{54.70} & \textbf{79.18}  & \textbf{0.12} \\
\hline
\multirow{2}{*}{VGG13 (53.77)} & \multirow{2}{*}{VGG8 (51.75)}
    & KD     & 54.78 & 78.05  & 0.27 \\
                                  & 
    & Ours   & \textbf{55.06} & \textbf{79.96} & \textbf{0.09} \\
\hline
\end{tabular}
\label{tab:kd_vs_ours}
\end{table*}

\subsection{Effect of Overconfidence}
To evaluate performance under realistic label noise, we constructed a noisy CIFAR-100 dataset using instance-dependent label noise rather than random corruption \cite{9674886}. Specifically, we trained a ResNet-18 for two epochs on the clean training set and used its cross-entropy loss to identify the 30\% most misclassified training examples. We then replaced their ground-truth labels with the weak model’s predictions, simulating plausible human annotation errors. Under this challenging noisy setting, our method significantly outperforms traditional KD across two teacher-student architecture pairs. For the ResNet32x4–ResNet8x4 pair, our approach improves Top-1 accuracy and reduces ECE from 0.16 to 0.12, demonstrating both better accuracy and confidence calibration. Similarly, for VGG13–VGG8, we achieve a ~1.5\% gain in Top-1 accuracy and drastically reduce ECE from 0.27 to 0.09 (shown in the table \ref{tab:kd_vs_ours} ). These results show that our method is more robust under noisy supervision and mitigates the overconfidence often seen in traditional KD approaches.

\begin{figure}[h]
    \centering
    \includegraphics[width=\linewidth]{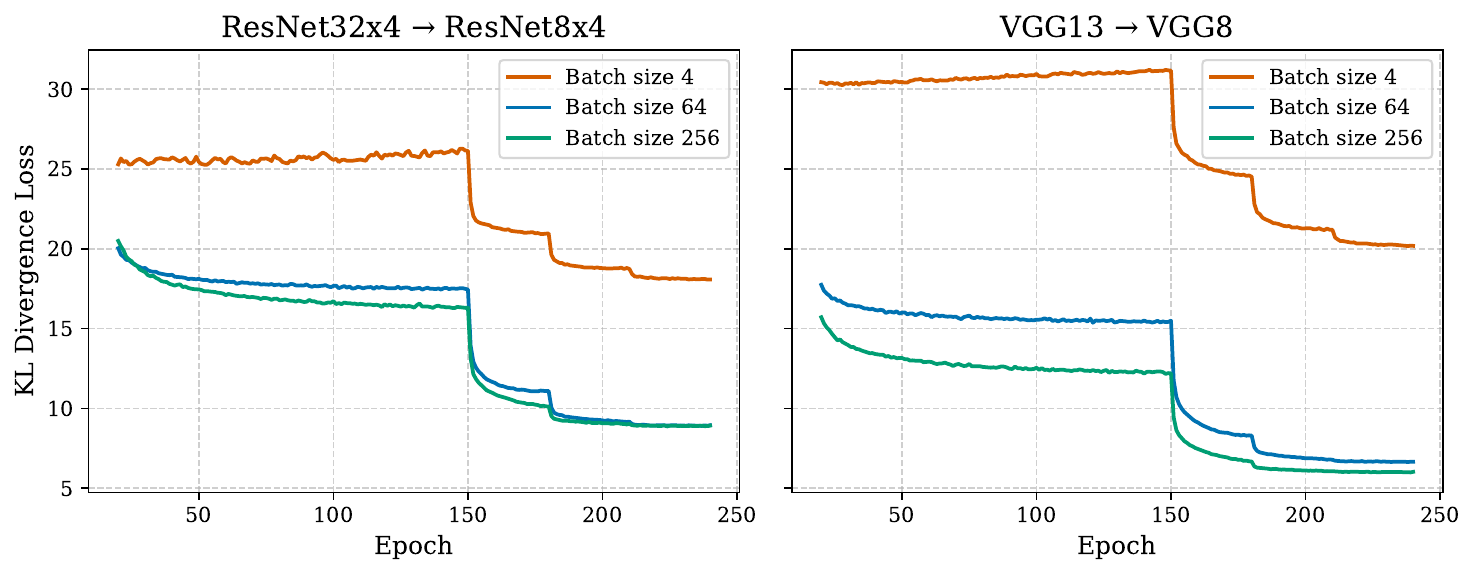}
    \caption{KL divergence loss between teacher and student logits during training for varying batch sizes (4, 64, 256) across two teacher-student pairs: ResNet32x4 → ResNet8x4 and VGG13 → VGG8. All losses are scaled equally for visual comparability.}
    \label{fig:kl_batchsize}
\end{figure}

\subsection{Effect of Batch Size}
As shown in Figure~\ref{fig:kl_batchsize}, we examine the effect of batch size on the KL divergence loss between teacher and student logits during training. For batch sizes of 64 and 256, the KL divergence curves remain stable and nearly identical, indicating that even moderate batch sizes provide sufficient sample diversity to generate reliable batch-level statistical perception, resulting in consistent representations between teacher and student. However, when the batch size is reduced to 4, we observe a noticeable increase and volatility in KL divergence. This is due to the limited number of samples per batch, which weakens the quality of class-level statistical signals and leads to a greater mismatch in logit distributions. This divergence negatively impacts optimization and highlights a limitation of our method in extremely low batch size settings. All KL divergence values were scaled equally for fair visual comparison.

\begin{figure}[htbp]
    \centering
    \includegraphics[width=0.7\linewidth]{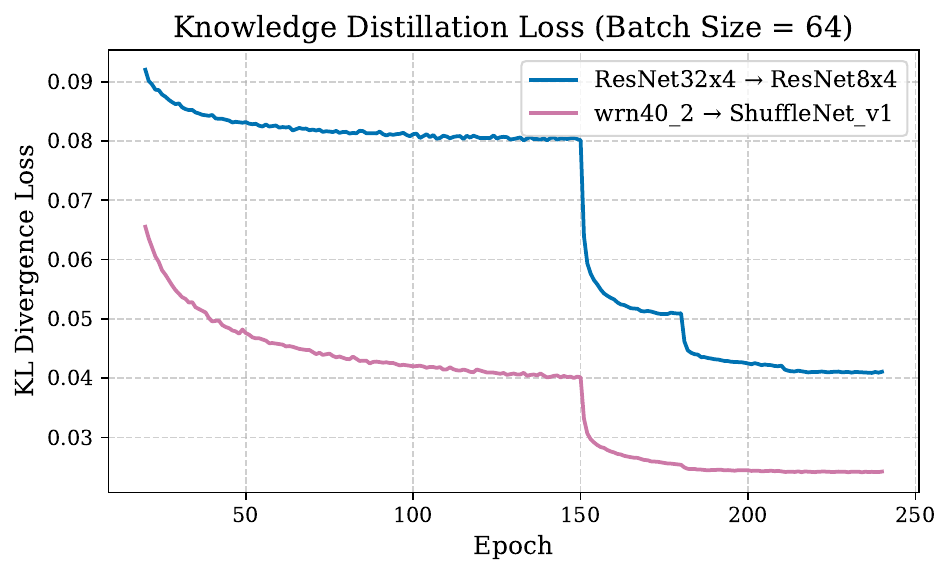}
    \caption{KL divergence loss comparison across different teacher-student configurations with batch size 64. Homogeneous pairs (e.g., ResNet32x4 → ResNet8x4) show lower divergence than heterogeneous ones (e.g., WRN40\_2 → ShuffleNet\_v1).}
    \label{fig:kl_loss_comparison}
\end{figure}

\begin{table}[h]
\centering
\caption{
Top-1 Accuracy (\%) on CIFAR-100 for different teacher-student model pairs. `KD` refers to standard knowledge distillation. `Ours` uses perception-normalized logits based on local batch statistics, while `Ours++` replaces local statistics with global teacher statistics aggregated across the training set.
}

\begin{tabular}{lcccccc}
\toprule
\textbf{Teacher $\rightarrow$ Student} & \textbf{Teacher} & \textbf{Student} & \textbf{KD} & \textbf{Ours} & \textbf{Ours++} \\
\midrule
ResNet32x4 $\rightarrow$ ResNet8x4     & 79.42 & 72.50 & 73.33 & \textbf{77.50} & 76.82 \\
VGG13 $\rightarrow$ VGG8               & 74.64 & 70.36 & 73.98 & 74.94 & \textbf{75.11} \\
WRN-40-2 $\rightarrow$ ShuffleNetV1    & 75.61 & 70.50 & 74.83 & \textbf{76.95} & 76.62 \\
\bottomrule
\end{tabular}
\label{tab:cifar100_global_vs_local}
\end{table}

\subsection{Global vs Local statistics}

To investigate the impact of normalization scope in our distillation method, we compare two variants: (1) one using batch-wise (local) statistics (Ours) and another (2) using dataset-wide (global) statistics (Ours++) for normalizing teacher logits. For the global variant, we precompute the mean and variance of teacher logits over the entire CIFAR-100 training set and apply them consistently during training. In contrast, the local variant computes these statistics dynamically within each batch. As shown in Table~\ref{tab:cifar100_global_vs_local}, both variants outperform the standard KD baseline across all teacher-student pairs, demonstrating the effectiveness of our logit normalization approach. Notably, the local version achieves the best performance, suggesting that adapting to the distribution of each batch introduces beneficial regularization. The global version remains competitive, indicating that both strategies enhance distillation, though local statistics offer a slight advantage in accuracy.

\subsection{Logit-Space Misalignment}

In knowledge distillation settings where the teacher and student networks differ architecturally, we observe that KL divergence loss tends to remain higher throughout training compared to homogeneous configurations, as shown in Figure~\ref{fig:kl_loss_comparison}. This is because in heterogeneous setups, the representation of a sample in the logit space is typically not aligned between the teacher and student; architectural differences induce mismatched inductive biases, feature extraction hierarchies, and semantic decompositions. As a result, the student struggles to mimic the teacher's output distribution effectively, even under the same temperature and training hyperparameters. This structural misalignment limits the efficacy of logit-based distillation alone, which relies on the student approximating the teacher’s soft targets directly. In contrast, feature-based distillation approaches can partially recover this gap by aligning intermediate feature representations, which are often more robust to architectural divergence and carry richer localized information. This explains why heterogeneous distillation with only logits underperforms, despite sharing the same optimization regime.
\subsection{Logit Complexity Analysis}
Neural knowledge distillation faces inherent challenges due to the architectural capacity gap between teacher and student models, where students with fewer parameters struggle to directly mimic the complex distributions generated by larger teachers. Two critical issues arise in traditional KD. First, there is a significant disparity between the probabilities of target and non-target classes. The teacher model tends to produce overly confident predictions for the target classes, which creates a considerable learning burden for the student model, as discussed in section 3.1 of the paper. Second, this challenge intensifies with an increasing number of classes, manifesting as multiple high-probability regions (multi-mode) across the class space. These issues become particularly pronounced in large language models, where the vocabulary size far exceeds typical image classification tasks, resulting in substantially more complex probability distributions for the student to learn. Our perception-based approach effectively addresses these limitations by significantly reducing the class dwarfing effect and diminishing the multi-mode peaks, as demonstrated in Figure \ref{fig:prob_distribution}. Using ResNet18 on ImageNet (1000 classes), we observe that our method produces more balanced probability distributions compared to temperature-scaled KD (T=4), making the dark knowledge transfer more tractable for the student model while preserving essential class relationships.

\begin{figure}[t]
    \centering
    \includegraphics[width=0.8\linewidth]{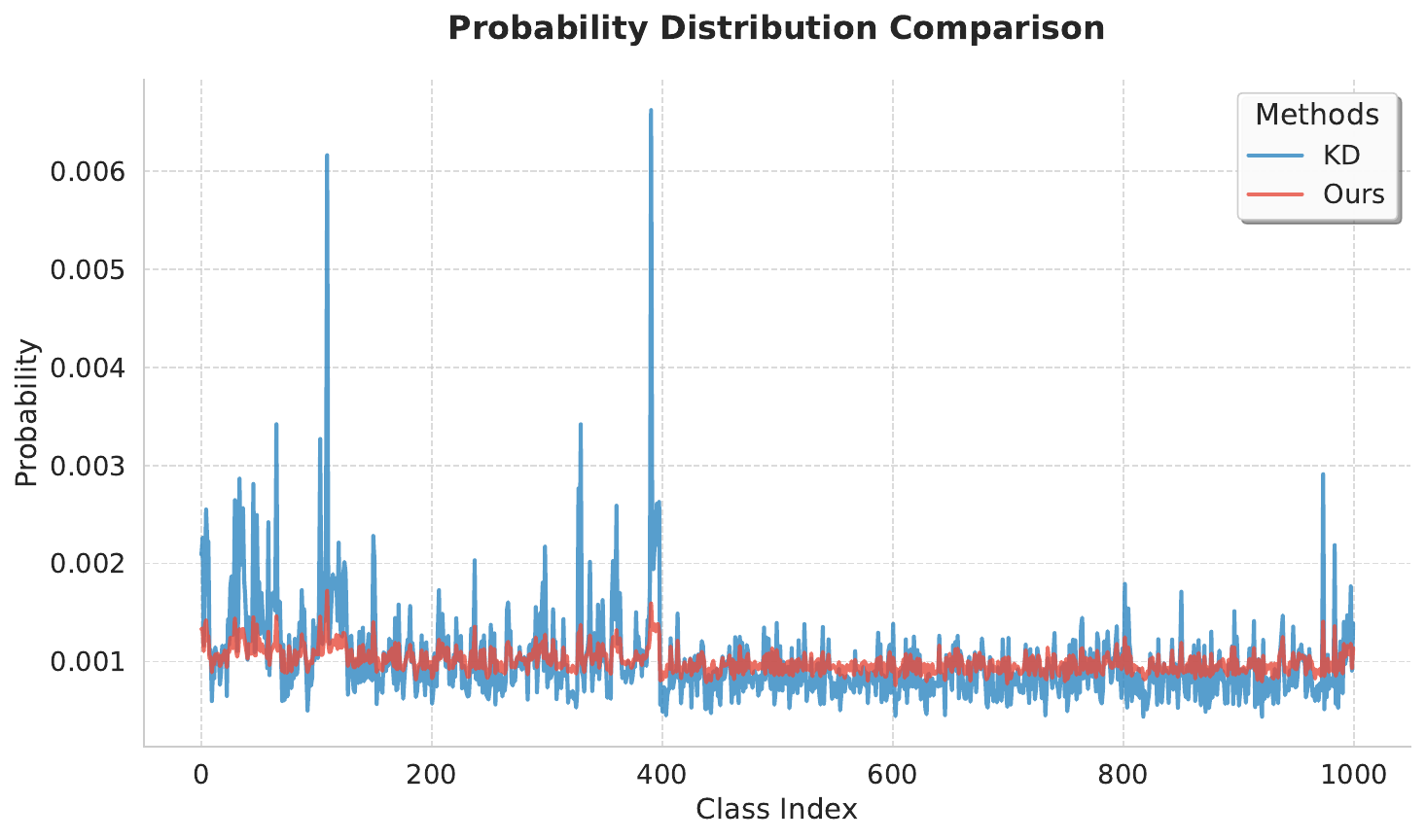}
    \caption{Comparison of probability distributions between traditional  KD and our proposed method on ImageNet using ResNet18. The blue line represents temperature-scaled KD (T=4), showing multiple high-confidence regions and significant disparities between target and non-target classes. The red line shows our method's distribution, which effectively reduces both the class dwarfing effect (lower peaks) and multi-modal nature of the distribution, resulting in more manageable class relationships for the student model to learn. This visualization demonstrates how our approach simplifies the dark knowledge transfer while maintaining informative class relationships across the 1000 ImageNet classes.}
    \label{fig:prob_distribution}
\end{figure}

\subsection{LumiNet in Large-Language Model}\label{llm}

\begin{table}
\centering
\caption{Evaluation results. We report the average R-L scores across 5 random seeds. The best scores of each model size are boldfaced, and the scores where the student model outperforms the teacher are marked with *.}
\label{tab:comparison}
\begin{tabular}{cccccc} 
\hline
\textbf{Model} & \textbf{\#Params} & \textbf{Method} & \textbf{Dolly} & \textbf{SelfInst} & \textbf{Vicuna} \\ 
\hline
\multirow{1}{*}{Teacher} & 1.5B & - & 27.6 & 14.3 & 16.3 \\ 
\hline
\multirow{13}{*}{GPT-2} & \multirow{5}{*}{120M} & SFT w/o KD & 23.3 & 10.0 & 14.7 \\
& & KD & 22.8 & 10.8 & 13.4 \\
& & SeqKD & 22.7 & 10.1 & 14.3 \\
& & \textbf{Ours} & \textbf{23.8}\textcolor{red}{\tiny(0.37)} & \textbf{11.4\textcolor{red}{\tiny(0.42)}} & \textbf{14.9\textcolor{red}{\tiny(0.10)}} \\ 
\cline{2-6}
& \multirow{5}{*}{340M} & SFT w/o KD & 25.5 & 13.0 & 16.0 \\
& & KD & 25.0 & 12.0 & 15.4 \\
& & SeqKD & 25.3 & 12.6 & 16.9* \\
& & \textbf{Ours} & \textbf{27.8*\textcolor{red}{\tiny(0.47)}} & \textbf{13.8\textcolor{red}{\tiny(0.48)}} & \textbf{17.1*\textcolor{red}{\tiny(0.16)}} \\ 
\cline{2-6}
& \multirow{5}{*}{760M} & SFT w/o KD & 25.4 & 12.4 & 16.1 \\
& & KD & 25.9 & 13.4 & 16.9* \\
& & SeqKD & 25.6 & 14.0 & 15.9 \\
& & \textbf{Ours} & \textbf{28.6*\textcolor{red}{\tiny(0.49)}}  & \textbf{14.7*\textcolor{red}{\tiny(0.19)}} & \textbf{17.5*\textcolor{red}{\tiny(0.10)}} \\
\hline
\end{tabular}
\end{table}

KD in Large Language Models (LLMs) presents unique challenges compared to its application in computer vision tasks. In vision models, the logit distribution usually displays a single-mode pattern, making it relatively easy for student models to replicate the teacher's probability distribution. However, LLMs operate with vocabulary spaces that span thousands to millions of tokens, resulting in complex 'multi-mode' distributions for a sample. This fundamental difference makes traditional KD approaches less effective for LLMs.

We used the dataset split within this space for our experiment \footnote{\url{https://huggingface.co/MiniLLM}}. We have used 13.5k samples from the Dolly dataset for fine-tuning, while 500 samples were reserved for testing. Additionally, 80 and 240 samples were used from Vicuna and SelfInst, respectively for evaluation. We have adapted our method to make it suitable for LLMs. Our experimental results, as shown in Table \ref{tab:comparison}, demonstrate that our method consistently outperforms existing KD approaches across different model sizes. For instance, with GPT-2 340M as the student model, our method achieves 27.8, 13.8, and 17.1 R-L scores on Dolly, SelfInst, and Vicuna test sets, respectively, surpassing both conventional KD (25.0, 12.0, 15.4) and Sequential KD. Notably, in several cases, our student models even outperform the 1.5B teacher model.

For our experimental setup, we used a 1.5B parameter model as the teacher and tested student models of varying sizes (120M, 340M, and 760M parameters) based on the GPT-2 architecture. The training was conducted with a batch size of 2. We implemented sequence-level tokenization and used the AdamW optimizer with a learning rate of 5e-5. The training was performed on a single 4090 GPU.


\subsection{Broader Impact}
LumiNet enhances logit-based knowledge distillation by avoiding the use of intermediate feature representations, which improves privacy and efficiency—especially in scenarios involving commercial APIs or federated learning. However, caution is warranted in high-stakes applications (e.g., healthcare, surveillance), as output logits can still leak sensitive training information. We strongly encourage responsible use, including thorough calibration, bias analysis, and evaluation of potential misuse in downstream contexts. Future iterations could integrate built-in safeguards or metrics to assess and mitigate such risks.

\end{document}